\newcommand{\eg}{e.g.\@\xspace}
\newcommand{\ie}{i.e.\@\xspace}
\newtheorem{lemmat}{Lemma}
\DeclarePairedDelimiter\floor{\lfloor}{\rfloor}
\newcommand{\norm}[1]{\left\lVert#1\right\rVert}
\newcommand{\powersgd}{\textsc{PowerSGD}}
\newcommand{\topk}{\textsc{Top$K$}}
\newcommand{\adasp}{\textsc{Accordion}}
\newcommand{\cifart}{\textsc{Cifar-10}}
\newcommand{\cifarh}{\textsc{Cifar-100}}
\newcommand{\wikitext}{\textsc{Wikitext-2}}
\newcommand{\nccl}{\textsc{Nccl}}
\newcommand{\pytorch}{\textsc{PyTorch}}
\newcommand{\elltopkl}[1]{$\ell_\text{low} = \text{K}~#1\%$}
\newcommand{\elltopkh}[1]{$\ell_\text{high} = \text{K}~#1\%$}
\newcommand{\ellpsgdl}[1]{$\ell_\text{low} = \text{Rank}~#1$}
\newcommand{\ellpsgdh}[1]{$\ell_\text{high} = \text{Rank}~#1$}
\newsavebox{\algleft}
\newsavebox{\algright}
\title{\adasp: Adaptive Gradient Communication via Critical Learning Regime Identification}
\author{\normalsize Saurabh Agarwal, Hongyi Wang, Kangwook Lee, Shivaram Venkataraman, Dimitris Papailiopoulos}
\date{University of Wisconsin-Madison}
\begin{document}
\maketitle







\vskip 0.3in

\begin{abstract}

Distributed model training suffers from communication bottlenecks due to frequent model updates transmitted across compute nodes.  To alleviate these bottlenecks, practitioners use gradient compression techniques like sparsification, quantization, or low-rank updates. The techniques usually require choosing a static compression ratio, often requiring users to 
balance the trade-off between model accuracy and per-iteration speedup.
In this work, we show that such performance degradation due to choosing a high compression ratio is not fundamental. An adaptive compression strategy can reduce communication while maintaining final test accuracy. Inspired by recent findings on critical learning regimes, in which small gradient errors can have irrecoverable impact on model performance, we propose \adasp\, a simple yet effective adaptive compression algorithm.
While \adasp\ maintains a high enough compression rate on average, it  avoids over-compressing gradients whenever in critical learning regimes, detected by a simple gradient-norm based criterion.
Our extensive experimental study over a number of machine learning tasks in distributed environments indicates that \adasp, maintains similar model accuracy to uncompressed training, yet achieves up to 5.5$\times$ better compression and up to 4.1$\times$ end-to-end speedup over static approaches. We show that \adasp\ also works for adjusting the batch size, another popular strategy for alleviating communication bottlenecks.

\end{abstract}



\section{Introduction}

Billion-parameter-scale neural networks and the rapid increase in compute requirements for training them has made distributed gradient-based methods a necessity. 
Synchronous, data-parallel training is the most widely adopted approach in this context, and requires combining the per-node gradient updates at every iteration~\cite{dean2012comm, iandola2016firecaffe, goyal2017accurate, shallue2018measuring}.
Communicating gradients frequently at such a large parameter scale leads to sub-optimal scalability in distributed implementations~\cite{dean2012comm, seide20141bit, alistarh2017qsgd, dawn2019scale, luo2020plink}.

To alleviate gradient communication bottlenecks, there are two main approaches proposed by prior work: (1) increasing the batch size~\cite{yao2018large, smith2017don, goyal2017accurate}, such that gradients are computed on a large batch by each worker thus reducing the frequency of per-epoch communication and (2) by performing lossy gradient compression~\cite{alistarh2017qsgd,vogels2019powersgd}, to reduce the size of the data communicated. Both of these methods involve navigating a trade-off between performance and accuracy.

It is a widely observed phenomenon that using large batch size can lead to degradation in final accuracy~\cite{yao2018hess, golmant2018computational, shallue2018measuring}. In response, several recent studies
 propose techniques to mitigate this accuracy loss, by using learning rate warmup~\cite{goyal2017accurate}, second order information~\cite{yao2018large, yao2018hess}, or layer-wise learning rate tuning~\cite{you2017scaling}. Some deployment challenges with these methods include
the need for significant amount of hyper-parameter tuning or running more epochs to converge to a good accuracy~\cite{golmant2018computational, shallue2018measuring}. 

On the other hand, when using gradient compression techniques like
low-precision training~\cite{alistarh2017qsgd, seide20141bit, wen2017terngrad, bernstein2018signsgd, acharya2019distributed}, \topk\ methods that exchange only the largest gradient coordinates~\cite{aji2017sparse, shi2019distributed, shi2019understanding, lin2017deep}, or methods that use low-rank based updates~\cite{wang2018atomo,vogels2019powersgd}, users need to specify an additional hyper-parameter that determines the degree of compression or sparsification before training begins. Choosing compression ratios presents a seemingly inherent trade-off between final model accuracy and the per-iteration communication overhead.
For instance, training ResNet-18 on \cifart\ using \topk\ with $K=10\%$ sparsification (i.e., where only 10\% of the top entries per gradient are communicated) takes around 3.6$\times$ less wall-clock time than training using $K=99\%$, but causes around 1.5\% degradation in final accuracy.

%

\begin{figure*}[t]
    \centering
    \includegraphics[width=0.9\textwidth]{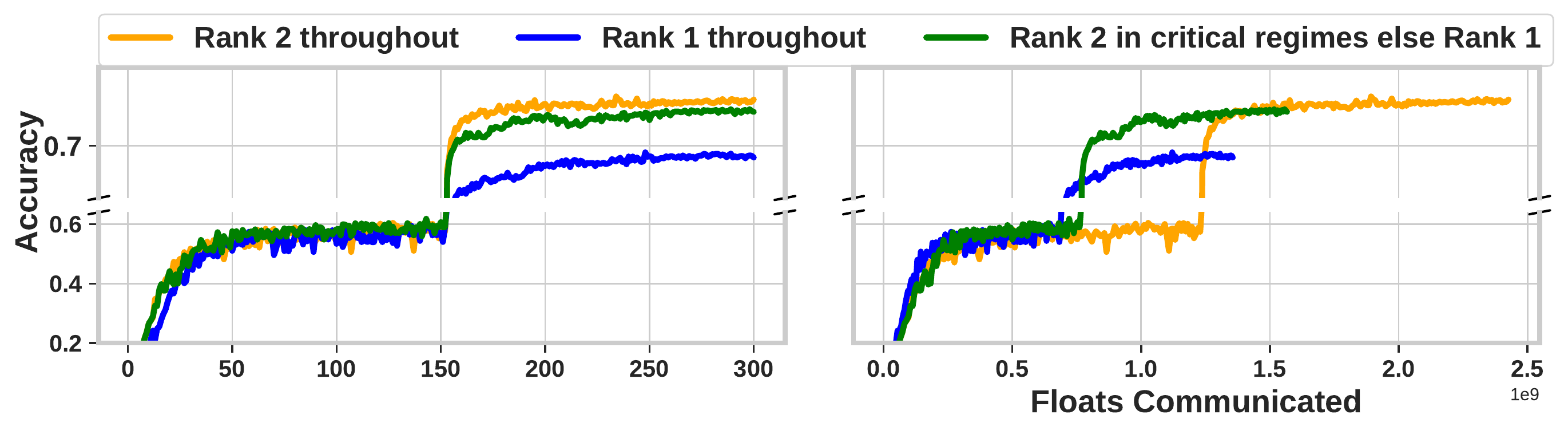}
    \caption{\small{\textbf{Effect of gradient compression in Critical Regimes when Training ResNet-18 on \cifarh:} We conduct these on a 4 Node cluster using \powersgd\ \textsc{Rank 2} (low compression) and \textsc{Rank 1} (high compression) (left) Accuracy vs Epochs, there exist a compression pattern which provides similar training accuracy as using low compression(\textsc{Rank 2}) throughout. (right) Accuracy vs Floats Communicated, this compression pattern communicates significantly less than using low compression throughout (\textsc{Rank 2}) }}
    \label{fig:critical_regimes_cifar100}
    \vspace{-0.1in}
\end{figure*}

%
%
%

%
%
This raises a fundamental question related to gradient compression:  \textit{Is this observed trade-off between communication and accuracy fundamental?}  
In this work, we first show that such a trade-off is \emph{not} fundamental but a mere artifact of using a \emph{fixed communication scheme} throughout training.
In other words, if a gradient communication schedule is chosen adaptively, then both the final model performance and communication efficiency can be improved, when compared against any fixed gradient communication schedule throughout training.
Figure~\ref{fig:critical_regimes_cifar100} shows an experiment where we train ResNet-18 on \cifarh, with different compression schemes.  We see that there exists an adaptive compression scheme that significantly reduces communication while maintaining final test accuracy.  

We attribute the power of adaptive schemes to the existence of \emph{critical regimes} in training. We build upon recent work by~\citet{achille2018critical} and ~\citet{jastrzebski2018on}, who show that even adding small noise to the data during critical regimes can result in poor generalization. We extend this notion to gradient communication and show that avoiding gradient compression during these critical regimes mitigates any loss in accuracy, as typically exhibited by non-adaptive communication-efficient training techniques. Thus, one can design an adaptive scheme that attempts to identify critical training regimes and adjusts the gradient compression rate accordingly.

Based on these findings, we propose \adasp, a simple but powerful gradient communication scheduling algorithm that is generic across models while imposing low computational overheads. \adasp\ inspects the change in the gradient norms to detect critical regimes and adjusts the communication schedule dynamically leading to performance improvements without sacrificing accuracy. We further show that \adasp\ works for both adjusting the gradient compression rate or the batch size without additional parameter tuning, hinting at a possible equivalence between the two.


Our experiments show that \adasp\ can achieve improved communication efficiency while maintaining high generalization performance on computer vision and language modelling tasks. 
In particular, using state-of-the-art (SOTA) sparsification (\topk) and low-rank approximation methods (\powersgd), \adasp\ achieves a reduction in communication of up to 3.7$\times$ and increase in speedup of up to 3.6$\times$, compared to using low compression throughout. At the same time \adasp\ achieves similar accuracy as the ``uncompressed'' vanilla SGD. When used for batch size tuning, we show that \adasp, \textit{without any hyper-parameter tuning}, is able to utilize extremely large batch sizes leading to reduction in communication of up to $5.5\times$ and speedup of $4.4\times$. We summarize our contributions below.

\paragraph{Contributions:}
\begin{itemize}[itemsep=-1pt,topsep=0pt]
    \item We show that gradient compression schemes need to be aware of critical regimes in training to mitigate any potential accuracy loss.
    \item We design \adasp\ an adaptive compression scheduling scheme that switches between low and high compression based on metrics that signal critical learning regimes. This allows for significant performance improvements without sacrificing accuracy.
    \item We provide extensive empirical evaluation of \adasp\ on several different neural networks using \powersgd\ and \topk\,, two state of the art gradient compression techniques, on several different data sets (\cifart, \cifarh, WikiText-2), showing \adasp\ reduces communication by $3.7\times$ without loss of accuracy or change in hyper-parameters.  
    \item We further show that \adasp\ can enable large batch training by switching between large and small batch sizes.  \adasp\, \textit{without any hyper-parameter tuning}, is able to reduce the time to accuracy by performing up to 5x fewer updates compared to using a small batch size.
\end{itemize}

\section{Related work}
\noindent\textbf{Lossy gradient compression.} Inspired by the fact that SGD can make good progress even with approximate gradients, various Gradient Compression methods have been proposed in recent literature. They can be broadly grouped into quantization, sparsification and low rank approximations. For quantization, \cite{seide20141bit, bernstein2018signsgd} replace each weight with just the sign values. While ~\cite{aji2017sparse, shi2019understanding, shi2019distributed, lin2017deep} use the largest few co-ordinates to create a sparse gradient. \citet{wangni2018gradient} randomly drop coordinates in the gradient update to create sparse gradient updates. For quantization \cite{alistarh2017qsgd, wen2017terngrad} quantize each gradient coordinate. In \cite{wang2018atomo,vogels2019powersgd} authors show that extremely low rank updates can achieve good compression without loss in accuracy. \citet{yu2018gradiveq} utilize correlation between gradients for linear compression. In \adasp\ our goal is to operate over an existing gradient compression method and provide reduction in communication without hurting generalization performance. 

\paragraph{Local SGD.} Unlike the lossy gradient compression methods which reduce the size of gradient updates, Local SGD methods  reduce the frequency of updates, by averaging weights every $\tau$ steps. \cite{Lin2020Don't, wang2020overlap, stich2019local, dutta2020slow} show that local SGD offers competitive performance on a variety of tasks. In this work we explicitly focus on reducing communication further using gradient compression or by varying batch size and plan to investigate if our insights can also apply for Local SGD in the future.

\paragraph{Adaptive communication.}  Wang et al.~\cite{wang2018adaptive} proposed an adaptive scheme that chooses the number of local steps $\tau$ adaptively, this method is applicable only on local SGD.
\citet{chen2018adacomp} proposed an auto-tuned compression method, but unlike \adasp\ it is a gradient compression method in itself and can't be applied with other methods. Recently \citet{guo2020acc} proposed an adaptive scheme to choose quantization factor for each coordinate in gradient vector, however in Figure~\ref{fig:icassp_compare} we observe that their method leads to some accuracy loss when used for \powersgd. 

\paragraph{Critical learning regimes.} In \cite{achille2018critical,Frankle2020The} authors highlighted the presence of critical regimes in neural network training. Various other works have highlighted the importance of early phases of training including, \citet{gur2018gradient} show gradient descent moves into a small sub-space after a few iterations, ~\cite{jastrzebski2018on, Jastrzebski2020The, keskar2016large} show that SGD is initially driven to difficult to optimize regimes. 
We leverage these insights to reduce communication when using gradient compression algorithms.

\paragraph{Batchsize scheduling.}
\cite{smith2017don, goyal2017accurate, hoffer2017train, you2017scaling, yin2017gradient}, show that large-batch SGD will hurt the eventual generalization performance. 
More surprisingly, \citet{you2017scaling} show that the use of large-batch SGD does not hurt the performance if used in later phases of the training. There are several works ~\cite{goyal2017accurate, you2017scaling, yao2018hess, smith2017don, devarakonda2017adabatch} which use adaptive batch size scaling. Either these methods require significant hyper-parameter tuning ~\cite{smith2017don} or require second order information ~\cite{yao2018large, yao2018hess}. ~\cite{yao2018large} does provide an adaptive method for batch size scheduling, but their method requires calculation of second order statistics which can often require more time than the gradient computation itself. In Sec.~\ref{sec:eval} we show that without any hyper-parameter tuning \adasp\ can enable large batch training and converge to same test accuracy with the same epoch budget as small batch training.
In Sec.~\ref{sec:batchsize_connection}, we show some non-trivial connection between \adasp\ and well-known suggestions on batch-size scheduling.

\section{Distributed SGD}\label{sec:prelim}
In this section, we formally describe the distributed SGD setting
Consider the standard synchronous distributed SGD setting with $N$ distributed workers~\cite{sergeev2018horovod}. 
For simplicity, we assume that each worker stores $n$ data points, giving us a total of $N\times n$ data points, say $\{(x_i, y_i)\}_{i=1}^{Nn}$. 

The goal is to find a model parameter $w$ that minimizes $f(w) = \frac{1}{Nn}\sum_{i=1}^{Nn} \ell(w;x_i,y_i)$ where $(x_i,y_i)$ is the $i$-th example. 
In particular, we minimize $f(w)$ using distributed SGD that operates as follows:
$w_{k+1} = w_{k}-\gamma_k \frac{1}{N} \sum_{i=1}^{N}\widehat{g}_i(w_{k})$ for $k \in \{0, 1, 2, \ldots\}$, where $w_0$ is the initial model, $\gamma_k$ is the step size, and $\widehat{g}_i(w)$ is a gradient computed at worker $i$ for a minibatch (of size $B$, with $B<n$).
\paragraph{Distributed SGD with adaptive gradient compression}
Vanilla distributed SGD incurs a huge communication cost per iteration that is proportional to the number of workers $N$ and the size of the gradient. 
To reduce this communication overhead, we consider a gradient compression strategy, say $C(\cdot, \ell)$, where $\ell$ is the parameter that determines the compression level used. 
With such a gradient compression strategy, the update equation becomes $w_{k+1} = w_{k}-\gamma_k \frac{1}{N} \sum_{i=1}^{N}C(\widehat{g}_i(w_{k}), \ell_k)$ for $k \in \{0, 1, 2, \ldots\}$, where communicating $C(\widehat{g}_i(w_{k}), \ell_k)$ requires much fewer bits than communicating the original gradients. 

\paragraph{Distributed SGD with adaptive batch size}
The number of communication rounds in a given epoch also depend on the batch size. For example a batch size $B_{high} > B_{low}$ will communicate $\floor*{\frac{B_{high}}{B_{low}}}$ times less than using batch size $B_{low}$ in a given epoch. Although the update equation remains the same $w_{k+1} = w_{k}-\gamma_k \frac{1}{N} \sum_{i=1}^{N}\widehat{g}_i(w_{k})$ for $k \in \{0, 1, 2, \ldots\}$, the number of steps $k$ taken by a model decreases by $\floor*{\frac{B_{high}}{B_{low}}}$ times for a fixed number of epochs. 

\paragraph{Goals} Our goal is to design an algorithm that automatically adapts the compression rate $\{\ell_k\}$  or batch size $B_k$ while training. Although the interplay between batch size and compression ratio is interesting, we don't explore these together, \ie we don't vary batch size when training with gradient compression. Here, we consider a centralized algorithm, \textit{i.e.}, one of the participating nodes decides $\ell_{k+1}$ or $B_{k+1}$ based on all the information available up to times $k$.
This communication rate is then shared with all the $N$ workers so that they can adapt either their compression ratio or batch size. 

\section{\adasp}
\begin{figure*}[t]
    \begin{center}
    \begin{subfigure}[b]{0.33\textwidth}
    \includegraphics[width=\textwidth]{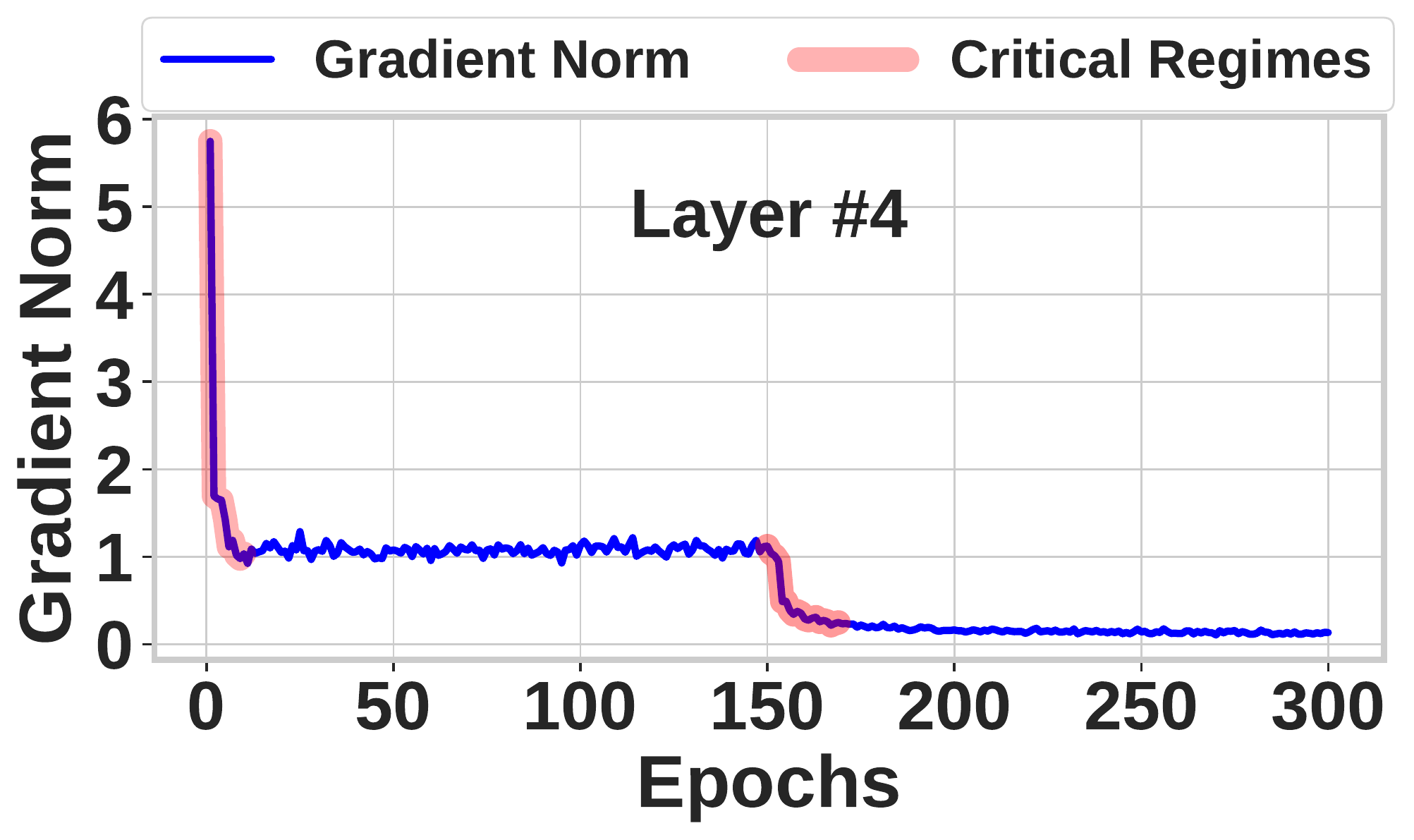}
    \caption{Critical Regimes}
     \label{fig:crit_res18_cifar100}
    \end{subfigure}
    \begin{subfigure}[b]{0.66\textwidth}
    \includegraphics[width=\textwidth]{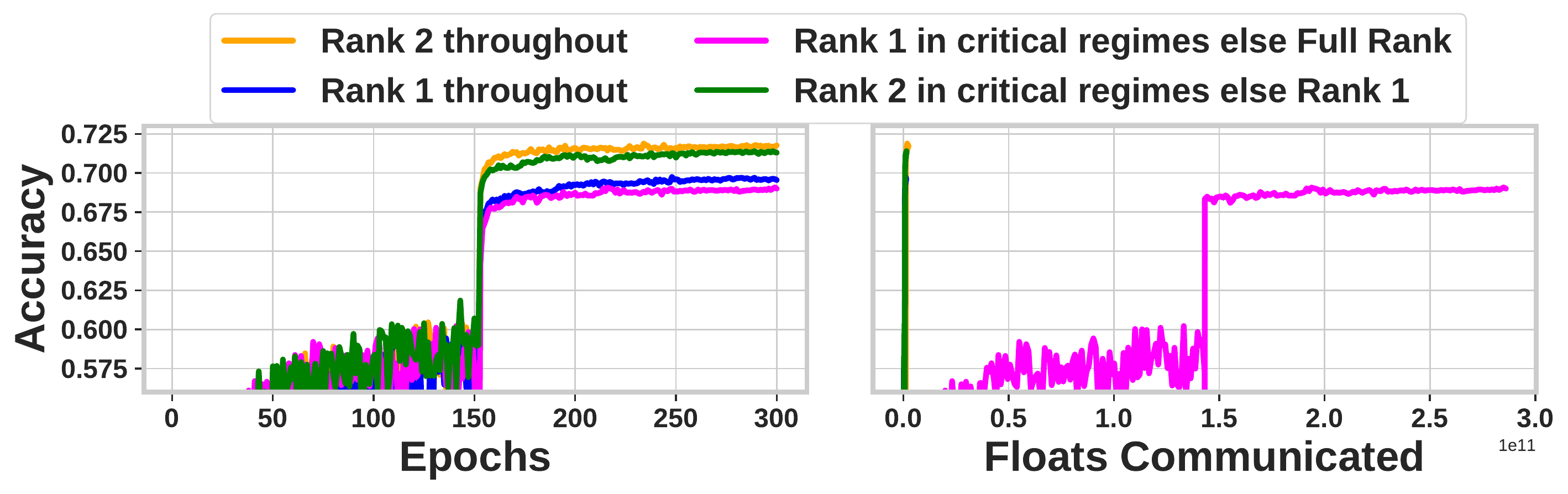}
    \caption{Accuracy vs Epochs and Floats Communicated}
    \label{fig:critical_regimes_full_rank}
    \end{subfigure}
    \end{center}
    \caption{\small{\textbf{Effect of gradient compression in Critical Regimes when Training ResNet-18 on \cifarh:} (a) Critical regimes in \cifarh, ResNet-18 (b, Left) Accuracy vs Epochs. Show the significance of critical regimes in training, using low compression(Rank 2) in critical regimes is enough to get similar accuracy as using low compression throughout . (b, Right) Accuracy vs Floats Communicated, Even when we use uncompressed (Full Rank)  gradients everywhere but use high compression (Rank 1) in critical regimes  it is not possible to bridge the accuracy gap.}}
    \vspace{-0.2in}
\end{figure*}

In this section, we first explain why adaptive gradient communication can help maintain high generalization performance while minimizing the communication cost. 
We study this first with gradient compression techniques and then based on these insights we propose \adasp\ a gradient communication scheduling algorithm. 
Finally, we show that there is a connection between batch size and gradient compression, and thus \adasp\ can also be used to enable large batch training without accuracy loss.  

\subsection{Adaptive communication using critical regimes}
\label{sec:critical_connection}

Recent work by \citet{achille2018critical} has identified \emph{critical regimes}  or phases of training that are important for training a high quality model. In particular, \citet{achille2018critical} show that the early phase of training is critical. They perform an experiment where the first few epochs have corrupted training data and then continue training the DNN with clean training data for the rest of the epochs.
Surprisingly, the DNN trained this way showed a significantly impaired generalization performance no matter how long it was trained with the clean data after the critical regime. 

We extend these ideas to aid in the design of an adaptive communication schedule and first study this using \powersgd\ as the gradient compression scheme. We begin by observing how the gradient norm for each layer behaves while training.
When training ResNet-18 on \cifarh, in Figure~\ref{fig:crit_res18_cifar100} we see two regions where gradient norm decreases rapidly; during the first $20$ epochs and the $10$ epochs right after the $150$-th epoch, \ie{}, the point at which learning rate decay occurs. We experimentally verify that these two regions are critical by considering the following compression schedule $\ell = \textsc{low}$ for the first $20$ epochs and for $10$ epochs after the $150$ epoch, and $\ell = \textsc{high}$ elsewhere. Under this scheme the gradients will not be over-compressed in the critical regimes, but at the same time the overall communication will be close to high compression. Figure~\ref{fig:critical_regimes_full_rank} shows the experimental results with ResNet-18 on \cifarh\ for the above scheme. It can be observed that just using low compression (rank 2) in these critical regimes and high compression (rank 1) elsewhere is sufficient to get the same accuracy as using low compression throughout while reducing communication significantly.

Interestingly we also observe in Figure~\ref{fig:critical_regimes_full_rank} that any loss in accuracy by using high compression in critical regimes is not recoverable by using low compression elsewhere. For instance, consider the following compression schedule: $\ell = \textsc{high compression rate}$ for first $20$ epochs and for $10$ epochs after the $150$ epoch, and $\ell = \textsc{no compression}$ elsewhere.
Under this schedule, gradients will be over-compressed in the critical regimes, but will be \emph{uncompressed} elsewhere. We see that for ResNet-18 on \cifarh\ even with significantly higher communication one can not overcome the damage done to training by over compressing in critical regimes.
We hypothesize that in critical regimes, SGD is navigating to the steeper parts of the loss surface and if we use over-compressed gradients in these regimes, then the training algorithm might take a different trajectory than what SGD would have taken originally. This might cause training to reach a sub-optimal minima leading to degradation in final test accuracy. 




\begin{figure*}[t]
    \begin{center}
    
    \begin{subfigure}[b]{0.49\textwidth}
        \includegraphics[width=\textwidth]{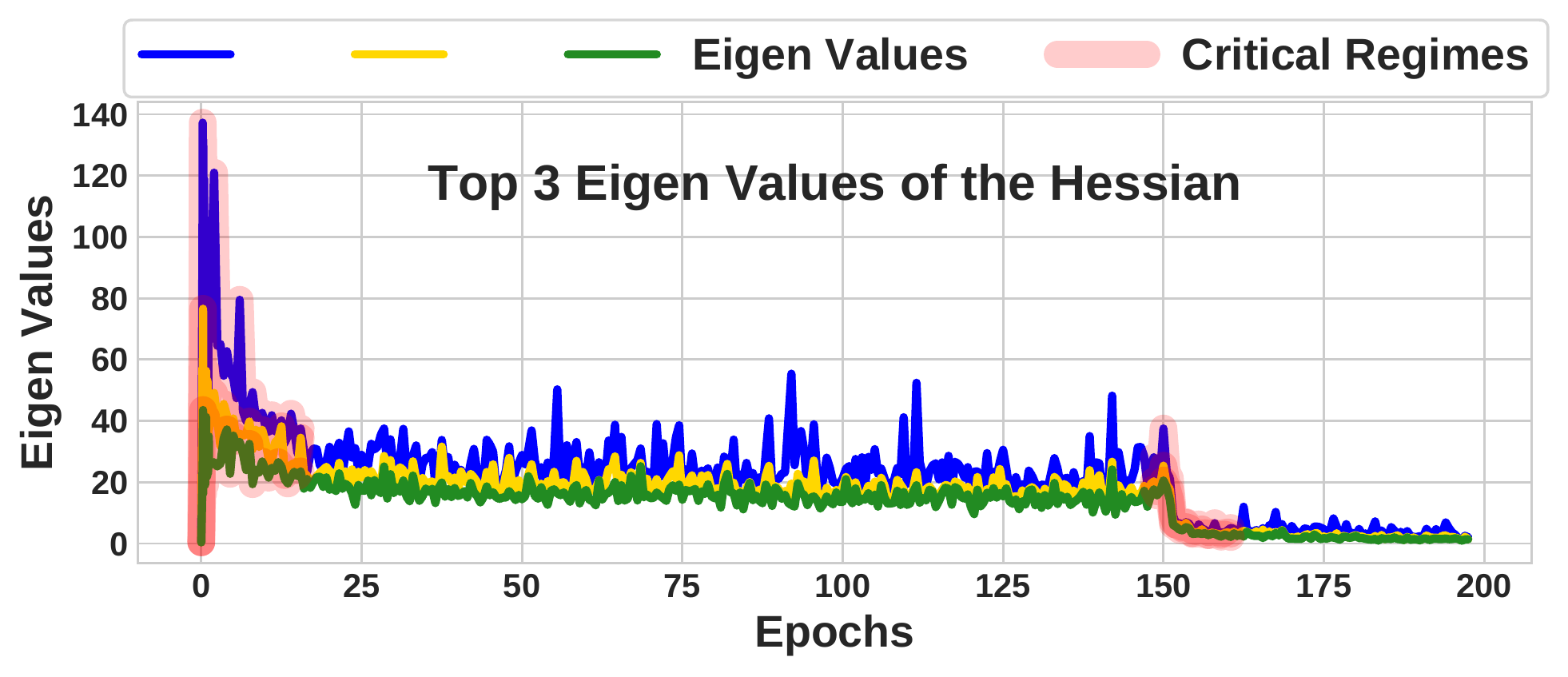}
        \caption{Critical Regimes based on Hessian}
        \vspace{-0.1in}
        \label{fig:critical_hessian}
    \end{subfigure}
    \begin{subfigure}[b]{0.49\textwidth}
        \includegraphics[width=\textwidth]{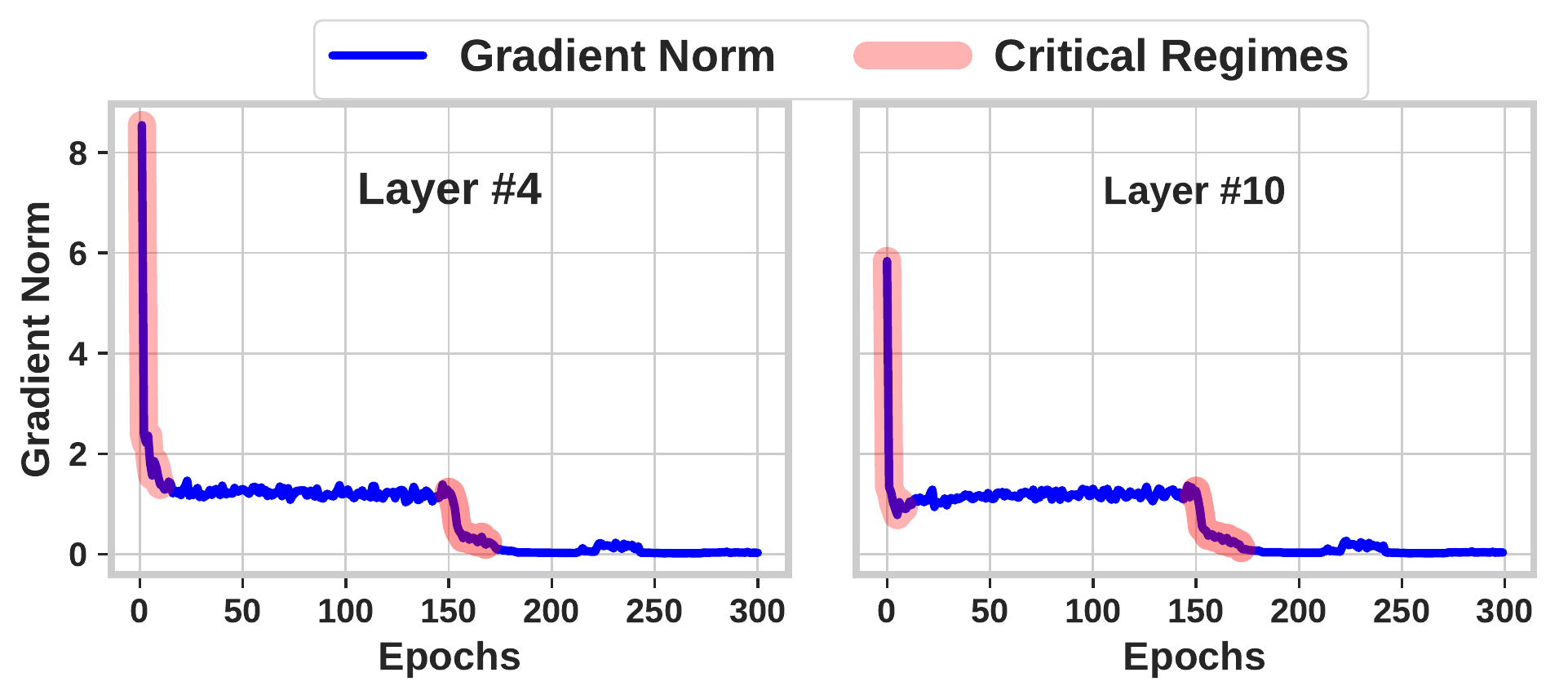}
        \caption{Critical Regimes based of Gradient Norm}
        \vspace{-0.1in}
        \label{fig:critical_grad}
    \end{subfigure}
    \end{center}
    \caption{\small{\textbf{Comparison of Critical Regimes found using Analysis of eigenvalues of Hessian vs Using the Norm of the Gradient:}} The experiment is performed on ResNet-18, for \cifart. We show that Critical Regimes detected by rapid decay in top eigenvalues of Hessian can also be detected using decay in gradient norm}
    \label{fig:critical_compare}
    \vspace{-0.2in}
\end{figure*}
    

\paragraph{Detecting Critical Regimes:}
Prior work for detecting critical regimes~\cite{jastrzebski2018on} used the change in eigenvalues of the Hessian as an indicator.
We next compare the critical regimes identified by the gradient norm approach described above with the approach used in \citet{jastrzebski2018on}.
In Figure~\ref{fig:critical_compare}, we show that these two approaches yield similar results for ResNet-18 on \cifart, with the latter having an advantage of being orders of magnitude faster to compute.

Thus, we can see that finding an effective communication schedule is akin to finding critical regimes in neural network training and these \textit{critical regimes} can be identified by measuring the change in gradient norm. 

\subsection{\adasp's Design}
We now provide a description of \adasp, our proposed algorithm that automatically switches between lower and higher communication levels by detecting critical regimes.
\adasp's first goal is to identify critical regimes efficiently.
Our experiments, as discussed previously (Figure~\ref{fig:critical_compare}), reveal that critical regimes can be identified by detecting the rate of change in gradient norms without using the computationally expensive technique of \cite{jastrzebski2018on, Jastrzebski2020The, keskar2016large}, where eigenvalues of the Hessian are used to detect critical regimes.  
This leads us to propose the following simple way to detect critical regimes:
\[
\frac{\big\lvert\norm{\Delta_{\text{old}}} - \norm{\Delta_\text{curr}} \big \rvert}{\norm{\Delta_\text{old}}} \geq \eta,
\]
where $\Delta_\text{curr}$ and  $\Delta_\text{prev}$, denotes the accumulated gradient in the current epoch and some previous epoch respectively, and $\eta$ is the threshold used to declare critical regimes.
We set $\eta = 0.5$ in all of our experiments.

We show the \adasp\ algorithm for gradient compression in Algorithm~\ref{alg:adasparse}.
For simplicity and usability, \adasp\ only switches between two levels of compression levels: $\ell_\text{low}$  and $\ell_\text{high}$.
Once \adasp\ detects critical regimes, it sets the compression level as $\ell_\text{low}$ to avoid an undesirable drop in accuracy.
Based on our observation, critical regimes also almost always occur after learning rate decay, therefore we let \adasp\ declare critical regime after every learning rate decay.
If \adasp\ detects that the critical phase ends, it changes the compression level to $\ell_\text{high}$ to save communication cost. For batch size we use the same algorithm, except instead of switching between $\ell_\text{low}$ and  $\ell_\text{high}$ we switch between $B_\text{low}$ and $B_\text{high}$. 

We remark that \adasp\ operates at the granularity of the gradient compressor being used.
For instance, \powersgd\ approximates the gradients of each layer independently, so \adasp\ will also operate at each layer independently and provide a suitable compression ratio for each layer in an adaptive manner during training. While batch size scheduling operates at the whole model so \adasp\ looks at the gradient of whole model and chooses a suitable batch size.

\paragraph{Computational and memory overhead:}
\adasp\ accumulates gradients of each layer during the backward pass. After each epoch, norms are calculated, creating $\norm{\nabla_{curr}}$. Once the compression ratio is chosen $\norm{\nabla_{curr}}$ becomes $\norm{\nabla_{old}}$. Thus requiring only size of the model(47 MB in ResNet-18) and a few float values worth of storage. Also \adasp\ only uses the ratio between previous and current gradient norms to detect critical regimes. This allows \adasp{} to be easily integrated in a training job where gradients are already calculated, thus making the computational overhead negligible.

\begin{algorithm}[tb]
\caption{\adasp\ for Gradient Compression }
\label{alg:adasparse}
\begin{algorithmic}
    \STATE {\bfseries HyperParameters:}compression levels \{$\ell_\text{low}$, $\ell_\text{high}$\} and detection threshold $\eta$
    \STATE {\bfseries Input:} {accumulated gradients in the current epoch ($\Delta_\text{curr}$) and in the previous epoch ($\Delta_\text{prev}$)}
    \STATE {\bfseries Input:} {learning rate of the current epoch ($\gamma_\text{curr}$) and of the next epoch ($\gamma_\text{next}$)}
    \STATE {\bfseries Output:} {compression ratio to use $\ell$}
    \IF{$\sfrac{\lvert\norm{\Delta_\text{prev}} - \norm{\Delta_\text{curr}}\rvert}{\norm{\Delta_\text{prev}}} \geq \eta$ or $\gamma_{next} < \gamma_{curr}$} 
        \STATE \textbf{return} $\ell_\text{low}$
    %
    \ELSE
        \STATE \textbf{return} $\ell_\text{high}$
    \ENDIF
 \end{algorithmic}
\end{algorithm}

\subsection{Relationship between gradient compression and adaptive batch-size}
\label{sec:batchsize_connection}
 
\begin{figure*}[ht]
    \begin{center}
    \begin{subfigure}[b]{0.33\textwidth}
        \includegraphics[width=\textwidth]{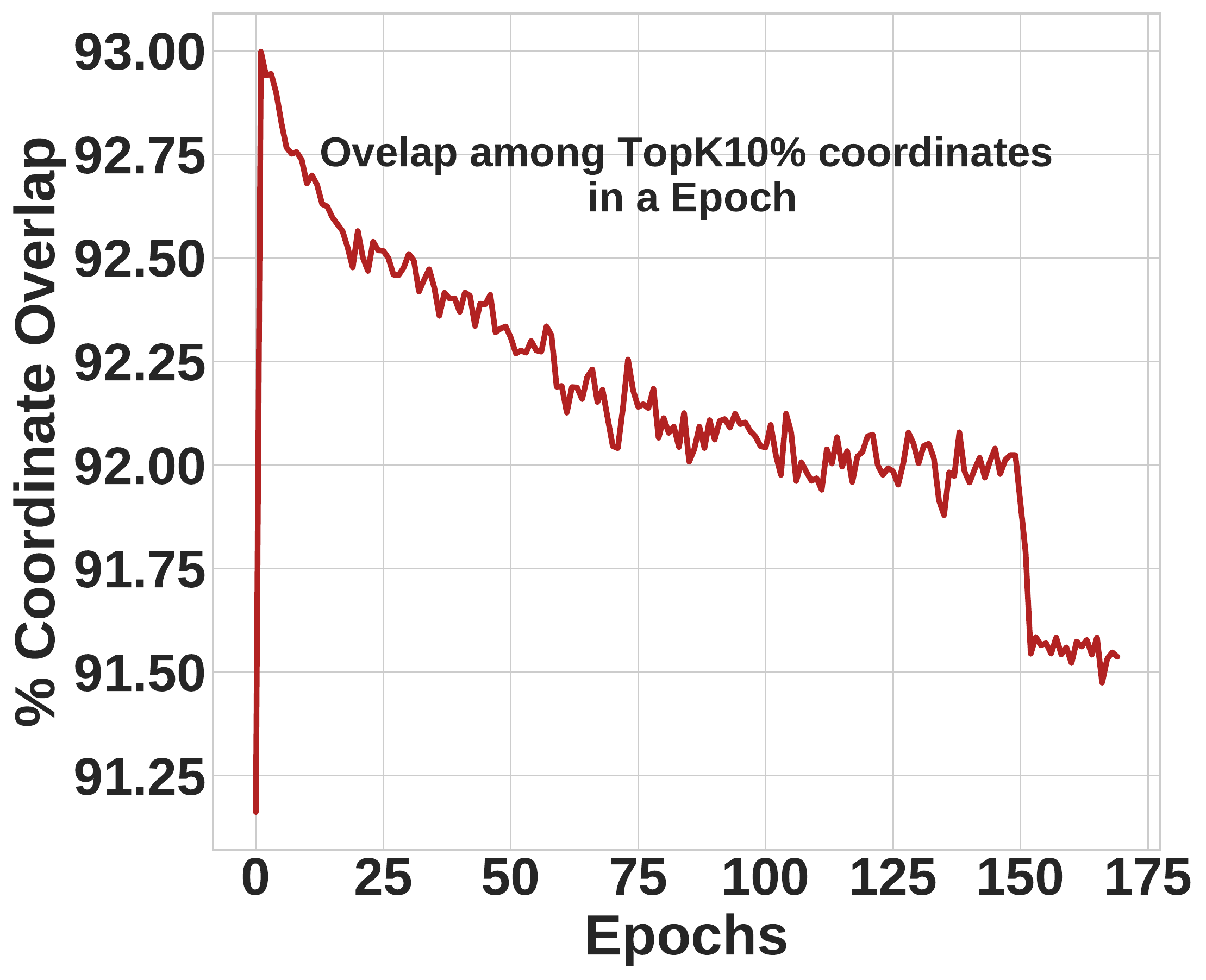}
        \caption{Overlap in coordinates }
        \vspace{-0.1in}
        \label{fig:topk_overlap}
    \end{subfigure}
    \begin{subfigure}[b]{0.66\textwidth}
        \includegraphics[width=\textwidth]{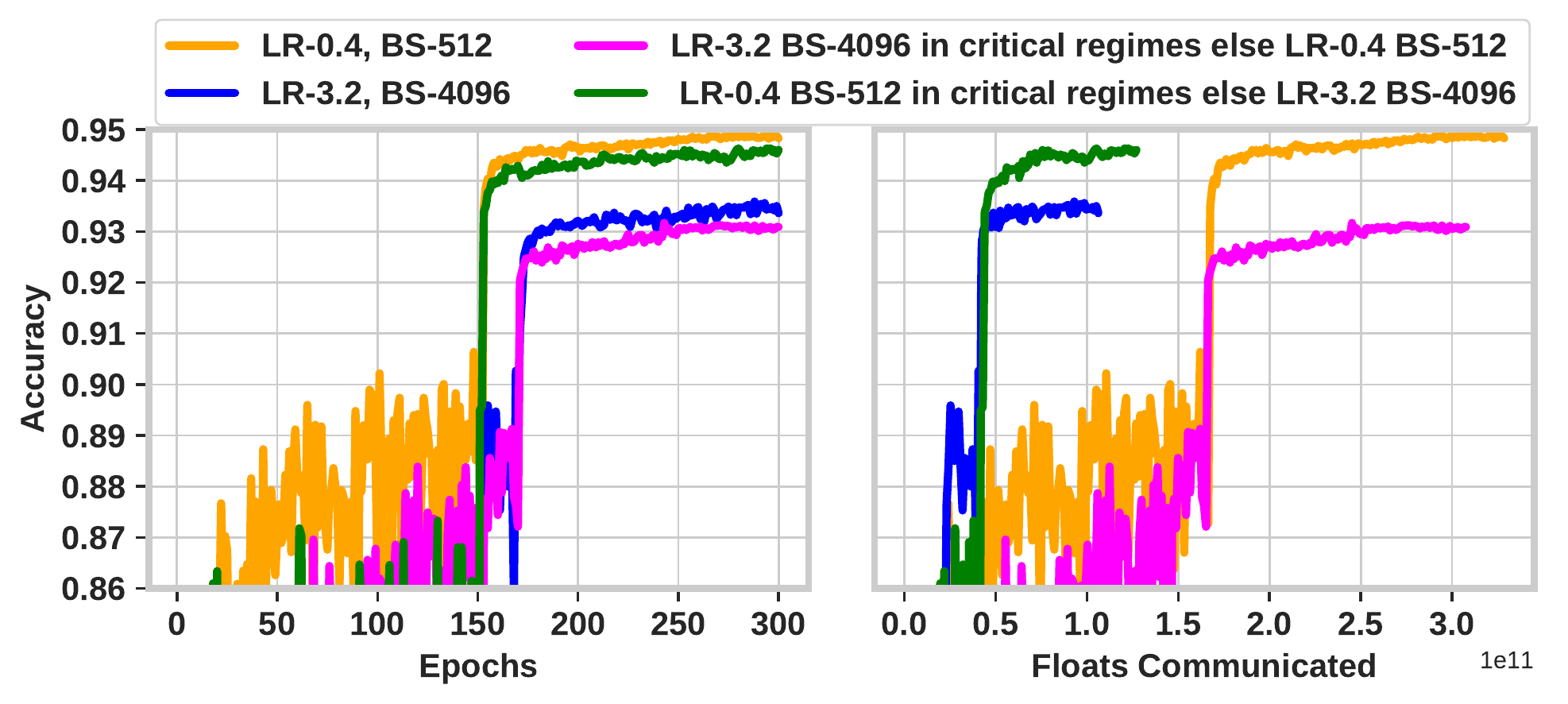}
        \caption{Effect of Different Batch sizes in critical regimes}
        \vspace{-0.1in}
        \label{fig:critical_batchsize}
    \end{subfigure}
    \end{center}
    \caption{\small{\textbf{Effect of batch size (ResNet-18 on \cifart):} (a) We show that there is significant overlap among the \textsc{Top}10\% coordinates. (b, left) Shows that using small batches only in critical regimes is enough to get performance similar to using small batches everywhere. We scale learning rate linearly with batch size as in~\cite{goyal2017accurate}, at steps 150 and 250 we decay the learning rate by 10 and 100 respectively. (b, right) accuracy vs communication.}}
    
    \label{fig:critical_regimes}
    \vspace{-0.1in}
\end{figure*}


We first evaluate the effect of batch size on neural network training through the lens of \emph{critical regimes}, which suggests using small batch sizes in critical regimes and large batch size outside critical regimes should not hurt test accuracy. We empirically show in Figure~\ref{fig:critical_batchsize} that this is indeed true. 

Next, the connection between compression and batch size tuning can be made more formal under the following assumption: ``each stochastic gradient is the sum of a sparse mean and a dense noise'', i.e.,
\begin{equation}
\begin{split}
\nabla_w \ell(w;x_i,y_i)& = \underbrace{\mathbb{E}_j \nabla_w \ell(w;x_j,y_j)}_{\text{sparse, large magnitudes}}\\  &+\underbrace{(\nabla_w \ell(w;x_i,y_i)-\mathbb{E}_j \nabla_w \ell(w;x_j,y_j))}_{\text{dense, small magnitudes}}
\end{split}
\end{equation}

Under this assumption, we can see that ``large batch gradient $\approx$ highly compressed gradient'', as a large batch gradient will be close to $\mathbb{E}_j \nabla_w \ell(w;x_j,y_j)$ by the law of large numbers, a highly compressed gradient will also pick up the same sparse components. 
Similarly, a small batch gradient is equivalent to a weakly compressed gradient. We would like to point out  that this assumption is not general and is not applicable on all data or models. It will only hold for models trained with sparsity inducing norms. 

We also conduct a simple experiment to support our intuition. 
We collect all stochastic gradients in an epoch and compute the overlap in coordinates of \textit{Top10\%} entries to find how much their supports overlap.
Figure~\ref{fig:topk_overlap} shows that $>90\%$ of the top-$K$ entries are common between a pair of stochastic gradients, thereby justifying the above gradient modeling. 


Thus, our findings along with prior work in literature can be summarized as high gradient compression, noisy training data, or large batch size in the critical regimes of training hurts generalization. We study this connection further in Appendix~\ref{sec:gradcompreesion_connection_batch_size}. 
This connection also suggests that \adasp\ can also be used to schedule batch size and in Section~\ref{sec:eval} we evaluate this.
\section{Experimental evaluation}
\label{sec:eval}
We experimentally verify the performance of \adasp\ when paired with two SOTA gradient compressors, \textit{i.e.}, (i) \powersgd~\cite{vogels2019powersgd}, which performs low-rank gradient factorization via a computationally efficient approach, and (ii) \topk\ sparsification~\cite{aji2017sparse}, which sparsifies the gradients by choosing the $K$ entries with largest absolute values. Further we also use \adasp\ to schedule batch size switching between batch size 512 and 4096 for \cifarh and \cifart . 

\subsection{Experimental setup}

\paragraph{Implementation:}
We implement \adasp\ in PyTorch~\cite{paszke2019pytorch}. All experiments were conducted on a cluster that consists of $4$ \texttt{p3.2xlarge} instances on Amazon EC2. Our implementation used \nccl{}, an optimized communication library for use with NVIDIA GPUs. For \powersgd\ and Batch Size experiments we used the all-reduce collective in NCCL and for \topk\ we used the all-gather collective.

\paragraph{Hyperparameters:}
\label{sec:hyper_param}
We fix $\eta$ to be 0.5 and run \adasp\ every 10 epochs \ie \adasp\ detects critical regimes by calculating rate of change between gradients accumulated in current epoch and the gradients accumulated 10 epochs back. We empirically observe that these choices of hyper-parameters lead to good results and have not tuned them. One of our primary goals was to design \adasp\ such that it should not require a significant amount of hyper-parameter tuning. Therefore for all of our experiments  we didn't perform any hyper-parameter tuning and used the same hyper-parameters as suggested by authors of previous compression methods, \eg For \powersgd\ we used the same setting as suggested by \citet{vogels2019powersgd}. For large batch size experiments we use the same hyper-parameters as used for regular training. For all our experiments on batch size we performed LR Warmup of 5 epochs as suggested by ~\citet{goyal2017accurate}, \ie for batch size 512 we linearly increase the learning rate from 0.1 to 0.4 in five epochs where 0.1 is learning rate for batch size 128. Due to relationship shown between batch Size and learning rate by~\citet{smith2017don,devarakonda2017adabatch} when  \adasp\ shifts to large batch it also correspondingly increases the learning in the same ratio, \ie when switching between Batch Size 512 to Batch size 4096, \adasp\ also scales the learning rate by $8\times$. Detailed experimental setup can be found in the Appendix. 


\paragraph{Dataset and Models:}
For image classification tasks we evaluated \adasp\ on \cifart\ and \cifarh~\cite{krizhevsky2009learning}. \cifart\ consists of 50,000 train images and 10,000 test images for 10 classes. \cifarh\ has the same number of samples but for 100 classes. For language modeling we used \wikitext\ which has around 2 million train tokens and around 245k testing tokens. 
We used standard prepossessing steps, details of which can be found in the Appendix.
To show the wide applicability of \adasp\ we consider a number of model architectures. For CNNs, we study networks both with and without skip connections. VGG-19~\cite{simonyan2014very} and GoogleNet~\cite{szegedy2015going} are two networks without skip connections. While ResNet-18~\cite{he2016deep}, Densenet~\cite{huang2017densely}, and Squeeze-and -Excitation~\cite{hu2018squeeze} are networks with skip connections.  For language tasks we used a two layer LSTM. More details about the models can be found in Appendix.


\paragraph{Metrics:}
We evaluate \adasp\ against high communication training on three different metrics: (i) accuracy; (ii) communication savings; (iii) total wall clock time saved. We train each method for the same number of epochs with the hyper-parameters suggested in prior literature~\cite{goyal2017accurate, vogels2019powersgd, aji2017sparse, smith2017don}. We report the mean test accuracy reached after three independent trials. Our error bars report $95\%$ confidence intervals.

\subsection{Results}
\label{sec:results}
\adasp's performance is summarized in Tables~\ref{tab:cifar10psgd} to~\ref{tab:cifar100batch}. For each model we state the accuracy achieved when using low communication, high communication, and compare it to using \adasp\ to automatically switch between low and high communication. Detailed convergence curves with error bars can be found in Appendix. 
\subsection{\adasp\ with \powersgd}
 \begin{table*}[ht]
\begin{minipage}{0.485\textwidth}
\centering
\caption{ \small{\adasp\ with \powersgd\ on \cifart\ }}
\label{tab:cifar10psgd}
\resizebox{\linewidth}{!}{%
\begin{tabular}{@{}lllllll@{}}
\toprule
\textbf{Network} &
  \textbf{Rank} &
  \textbf{Accuracy} &
  \multicolumn{2}{l}{\textbf{\begin{tabular}[c]{@{}l@{}}Data Sent\\(Million Floats)\end{tabular}}} &
  \multicolumn{2}{l}{\textbf{\begin{tabular}[c]{@{}l@{}}Time\\ (Seconds)\end{tabular}}} \\ \midrule
\multirow{3}{*}{Resnet-18} & Rank 2    & \textbf{94.5\%} & 2418.4 & {($1 \times$)} & {3509} & {($1 \times$)} \\ \cmidrule(l){2-7} 
                           & Rank 1    & 94.1\% & 1350.4 & ($1.7 \times$) & 3386 & {($1.03 \times$)}\\ \cmidrule(l){2-7} 
                           & \adasp & \textbf{94.5\%} & 1571.8 & {(\textbf{1.5}$\mathbf{\times}$)} & 3398 & {(\textbf{1.03}$\mathbf{\times}$)} \\ \midrule
\multirow{3}{*}{VGG-19bn}  & Rank 4    & \textbf{93.4\%} & 6752.0 & {($1 \times$)} & 3613 & {($1 \times$)} \\ \cmidrule(l){2-7} 
                           & Rank 1    & 68.6\% & 2074.9 & {($3.25 \times$)} & 3158 & {($1.14 \times$)} \\ \cmidrule(l){2-7} 
                           & \adasp & \textbf{92.9\%} & 2945.1 & {(\textbf{2.3}$\mathbf{\times}$)} & 3220 & {(\textbf{1.12}$\mathbf{\times}$)} \\ \midrule
\multirow{3}{*}{Senet}     & Rank 4    & \textbf{94.5\%}& 4361.3 & {($1 \times$)} & 4689 & {($1 \times$)} \\ \cmidrule(l){2-7} 
                           & Rank 1    & 94.2\% & 1392.6 & {($3.1 \times$)} & 4134 & {($1.13 \times$)} \\ \cmidrule(l){2-7} 
                           & \adasp & \textbf{94.5\%} & 2264.4 & {(\textbf{1.9}$\mathbf{\times}$)} & 4298 & {(\textbf{1.09}$\mathbf{\times}$)} \\ \bottomrule
\end{tabular}}
\end{minipage}\quad
\begin{minipage}{0.49\textwidth}
\centering
\caption{\small{\adasp\ with\hspace{0.01cm}\powersgd\ on \cifarh }}
\label{tab:cifar100psgd}
\resizebox{\linewidth}{!}{%
\begin{tabular}{@{}lllllll@{}}
\toprule
\textbf{Network} &
  \textbf{Rank} &
  \textbf{Accuracy} &
  \multicolumn{2}{l}{\textbf{\begin{tabular}[c]{@{}l@{}}Data Sent\\(Million Floats)\end{tabular}}} &
  \multicolumn{2}{l}{\textbf{\begin{tabular}[c]{@{}l@{}}Time\\ (Seconds)\end{tabular}}} \\ \midrule
\multirow{3}{*}{Resnet-18} & Rank 2    & \textbf{71.7}\% & 2426.3 & {($1 \times$)} & 3521 & {($1 \times$)} \\ \cmidrule(l){2-7} 
                           & Rank 1    & 70.0\% & 1355.7 & {($1.8 \times$)} & 3388 & {($1.04 \times$)}\\ \cmidrule(l){2-7} 
                           & \adasp & \textbf{71.8}\% & 1566.3 & {(\textbf{1.6}$\mathbf{\times}$)} & 3419 & {(\textbf{1.03}$\mathbf{\times}$)} \\ \midrule
\multirow{3}{*}{DenseNet}  & Rank 2    & \textbf{72.0\%} & 3387.4 & {($1 \times$)} & 13613 & {($1 \times$)} \\ \cmidrule(l){2-7} 
                           & Rank 1    & 71.6\% & 2155.6 & {($1.6 \times$)} & 12977 & {($1.04 \times$)} \\ \cmidrule(l){2-7} 
                           & \adasp & \textbf{72.5\%} & 2284.9 & {(\textbf{1.5}$\mathbf{\times}$)} & 13173 & {(\textbf{1.03}$\mathbf{\times}$)} \\ \midrule
\multirow{3}{*}{Senet}     & Rank 2    & \textbf{72.5\%} & 2878.1 & {($1 \times$)} & 5217 & {($1 \times$)} \\ \cmidrule(l){2-7} 
                           & Rank 1    & 71.5\% & 1683.1 & {($1.7 \times$)} & 4994 & {($1.04 \times$)} \\ \cmidrule(l){2-7} 
                           & \adasp & \textbf{72.4\%} & 2175.6 & {(\textbf{1.3}$\mathbf{\times}$)} & 5074 & {(\textbf{1.03}$\times$)} \\ \bottomrule
                           
\end{tabular}}
\end{minipage}
\vspace{-0.1in}
\end{table*}
\powersgd~\cite{vogels2019powersgd} shows that using extremely low rank updates (Rank-2 or Rank-4) with error-feedback~\cite{stich2019error} can lead to the same accuracy as syncSGD. In Table~\ref{tab:cifar10psgd} and~\ref{tab:cifar100psgd}  we show that \adasp\ by performing adaptive switching between Rank-1 and Rank-2,4 reaches similar accuracy but with significantly less communication. For \eg in~\Cref{tab:cifar100psgd} with ResNet-18 on \cifarh\, using $\ell_\text{low}=\text{Rank}~2$ leads to accuracy of $72.4\%$ while $\ell_\text{high}=~\text{Rank}~1$ achieves $71.3\%$. \adasp\ switching between \textsc{Rank~2} and \textsc{Rank~1} achieves an accuracy of $72.3\%$. Figure~\ref{fig:powersgd_vgg19bn} shows the result for VGG-19bn trained with \cifart, in this case \adasp\ almost bridges the accuracy gap of 25\% while saving almost $2.3\times$ in communication.
\begin{figure*}[t]
    \begin{center}
         \includegraphics[width=0.9\textwidth]{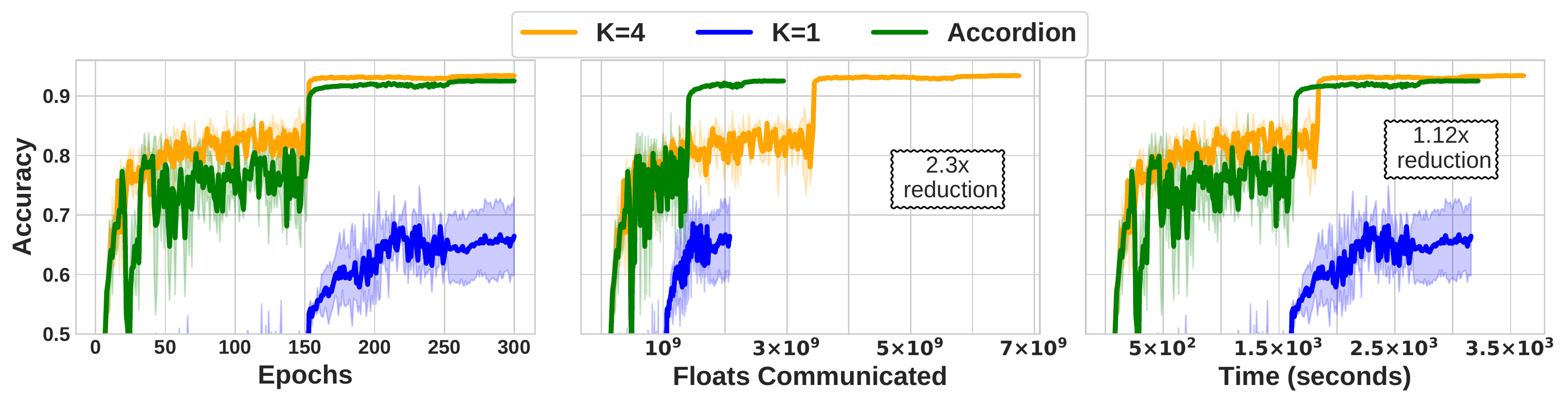}
    \end{center}
    \caption{\small{\textbf{\adasp\ using \powersgd\ with $\ell_\text{low} = \text{rank}~4$ and $\ell_\text{high} = \text{rank}~1$ on VGG-19bn:} We show \adasp\ being able to bridge more than 25\% of accuracy difference with 2.3$\times$ less communication .
    }}
    \label{fig:powersgd_vgg19bn}
\end{figure*}

\subsection{\adasp\ with \topk}
\begin{table*}[ht]
\begin{minipage}{0.485\textwidth}
\caption{\small{\adasp\ using \topk\ on \cifart\ } }
\label{tab:cifar10topk}
\resizebox{\linewidth}{!}{
\begin{tabular}{@{}lllllll@{}}
\toprule
\textbf{Network} &
  \textbf{K(\%)} &
  \textbf{Accuracy} &
  \multicolumn{2}{l}{\textbf{\begin{tabular}[c]{@{}l@{}}Data Sent\\(Billion Floats)\end{tabular}}} &
  \multicolumn{2}{l}{\textbf{\begin{tabular}[c]{@{}l@{}}Time\\ (Seconds)\end{tabular}}} \\ \midrule
\multirow{3}{*}{Resnet-18} & 99    & \textbf{94.2\%} & 2626.1 & {($1 \times$)} & 33672 & {($1 \times$)} \\ \cmidrule(l){2-7} 
                          & 10    & 93.1\% & 262.8 & {($9.9 \times$)} & 7957 & {($4.2 \times$)}\\ \cmidrule(l){2-7} 
                          & \adasp & \textbf{93.9\%} & 976.7 & {(\textbf{2.8}$\mathbf{\times}$)} & 9356 & {(\textbf{3.6}$\mathbf{\times}$)} \\ \midrule
\multirow{3}{*}{GoogleNet}  & 99   & \textbf{94.6\%} & 1430.9 & {($1 \times$)} & 28476 & {($1 \times$)} \\ \cmidrule(l){2-7} 
                          & 10    & 94.1\% & 145.2 & {($9.8 \times$)} & 13111 & {($2.1 \times$)} \\ \cmidrule(l){2-7}
                          & \adasp & \textbf{94.7\%} & 383.8 & {(\textbf{3.7}$\mathbf{\times}$)} & 16022 & {(\textbf{1.7}$\mathbf{\times}$)} \\ \midrule
\multirow{3}{*}{Senet}     & 99    & \textbf{94.6\%} & 2648.7 & {($1 \times$)} & 29977 & {($1 \times$)} \\ \cmidrule(l){2-7} 
                          & 10    & 93.8\% & 267.9 & {($9.8 \times$)} & 8055 & {($3.7 \times$)} \\ \cmidrule(l){2-7} 
                          & \adasp & \textbf{94.5\%} & 869.8 & {(\textbf{3.0}$\mathbf{\times}$)} & 13071 & {(\textbf{2.29}$\mathbf{\times}$)} \\ \bottomrule

\end{tabular}}
\end{minipage}\quad
\vspace{-0.1in}
\begin{minipage}{0.49\textwidth}
\caption{\small{\adasp\ using \topk\ on \cifarh\ }}
\label{tab:cifar100topk}
\resizebox{\linewidth}{!}{
\begin{tabular}{@{}lllllll@{}}
\toprule
\textbf{Network} &
  \textbf{K(\%)} &
  \textbf{Accuracy} &
  \multicolumn{2}{l}{\textbf{\begin{tabular}[c]{@{}l@{}}Data Sent\\(Billion Floats)\end{tabular}}} &
  \multicolumn{2}{l}{\textbf{\begin{tabular}[c]{@{}l@{}}Time\\ (Seconds)\end{tabular}}} \\ \midrule
\multirow{3}{*}{Resnet-18} & 99    & \textbf{72.4\%} & 2636.9 & {($1 \times$)} & 53460 & {($1 \times$)} \\ \cmidrule(l){2-7} 
                          & 25    & 71.3\% & 659.4 & {($3.9 \times$)} & 6176 & {($8.6 \times$)}\\ \cmidrule(l){2-7} 
                          & \adasp & \textbf{72.3\%} & 923.6 & {(\textbf{2.8}$\mathbf{\times}$)} & 14223 & {(\textbf{3.8}$\mathbf{\times}$)} \\ \midrule
\multirow{3}{*}{GoogleNet}  & 99   & \textbf{76.2\%} & 1452.4 & {($1 \times$)} & 28579 & {($1 \times$)} \\ \cmidrule(l){2-7} 
                          & 25    & 75.3\% & 367.3 & {($3.9 \times$)} & 12810 & {($2.23 \times$)} \\ \cmidrule(l){2-7}
                          & \adasp & \textbf{76.2\%} & 539.9 & {(\textbf{2.7}$\mathbf{\times}$)} & 15639 & {(\textbf{1.82}$\mathbf{\times}$)} \\ \midrule
\multirow{3}{*}{Senet}     & 99    & \textbf{72.8\%} & 2659.5 & {($1 \times$)} & 30312 & {($1 \times$)} \\ \cmidrule(l){2-7} 
                          & 25    & 71.9\% & 671.9 & {($3.9 \times$)} & 7376 & {($4.1 \times$)} \\ \cmidrule(l){2-7} 
                          & \adasp & \textbf{72.7\%} & 966.13 & {(\textbf{2.8}$\mathbf{\times}$)} & 10689 & {(\textbf{2.8}$\mathbf{\times}$)} \\ \bottomrule
                           
\end{tabular}}
\end{minipage}
\end{table*}
 In Table~\ref{tab:cifar10topk} and ~\ref{tab:cifar100topk} we show \adasp\ reaches the same accuracy as using \topk 99\% but with significantly less communication. Our implementation of TopK follows from~\citet{aji2017sparse}. We were unable to find details on parameters which work reasonably well for all networks. Thus, from a users perspective who wants performance extremely close to syncSGD we choose  \topk 99\% as low compression. For high compression we choose a value which provides significantly more compression. For ResNet-18 trained on \cifart\ we observe that high compression, \topk 25\% reaches accuracy of 71.3\% while low compression \topk 99\% reaches accuracy of 72.4\%, \adasp\ on the other hand reaches accuracy of 72.3\% while reducing the communication by $2.8\times$.  

\subsection{\adasp\ with  Large Batch size}
\begin{table*}[ht]
\begin{minipage}{0.485\textwidth}
\caption{\small{\adasp\ switching Batch Size on \cifart\ } }
\label{tab:cifar10batch}
\resizebox{\linewidth}{!}{
\begin{tabular}{@{}lllllll@{}}
\toprule
\textbf{Network} &
  \textbf{Batch Size} &
  \textbf{Accuracy} &
  \multicolumn{2}{l}{\textbf{\begin{tabular}[c]{@{}l@{}}Data Sent\\(Billion Floats)\end{tabular}}} &
  \multicolumn{2}{l}{\textbf{\begin{tabular}[c]{@{}l@{}}Time\\ (Seconds)\end{tabular}}} \\ \midrule
\multirow{3}{*}{Resnet-18} & 512    & \textbf{94.5\%} & 326.5 & {($1 \times$)} & 5009 & {($1 \times$)} \\ \cmidrule(l){2-7} 
                          & 4096    & 93.2\% & 40.22 & {($8 \times$)} & 1721 & {($2.9 \times$)}\\ \cmidrule(l){2-7} 
                          & \adasp & \textbf{94.4\%} & 59.22 & {(\textbf{5.6}$\mathbf{\times}$)} & 1959 & {(\textbf{2.5}$\mathbf{\times}$)} \\ \midrule
\multirow{3}{*}{GoogLeNet}  & 512   & \textbf{94.7\%} & 181.28 & {($1 \times$)} & 12449 & {($1 \times$)} \\ \cmidrule(l){2-7} 
                          & 4096    & 93.1\% & 22.19 & {($8.1 \times$)} & 3386 & {($3.67 \times$)} \\ \cmidrule(l){2-7}
                          & \adasp & \textbf{94.7\%} & 32.68 & {(\textbf{5.5}$\mathbf{\times}$)} & 6220 & {(\textbf{2.0}$\mathbf{\times}$)} \\ \midrule
\multirow{3}{*}{DenseNet}     & 512    & \textbf{93.9\%} & 29.4 & {($1 \times$)} & 14489 & {($1 \times$)} \\ \cmidrule(l){2-7} 
                          & 4096    & 93.1\% & 3.6 & {($8.1 \times$)} & 2759 & {($5.2 \times$)} \\ \cmidrule(l){2-7} 
                          & \adasp & \textbf{94.0\%} & 5.3 & {(\textbf{5.5}$\mathbf{\times}$)} & 3547 & {(\textbf{4}$\mathbf{\times}$)} \\ \bottomrule

\end{tabular}}                           
\end{minipage}\quad
\begin{minipage}{0.485\textwidth}
\caption{\small{\adasp\ switching Batch Size on \cifarh\ }}
\label{tab:cifar100batch}
\resizebox{\linewidth}{!}{
\begin{tabular}{@{}lllllll@{}}
\toprule
\textbf{Network} &
  \textbf{Batch Size} &
  \textbf{Accuracy} &
  \multicolumn{2}{l}{\textbf{\begin{tabular}[c]{@{}l@{}}Data Sent\\(Billion Floats)\end{tabular}}} &
  \multicolumn{2}{l}{\textbf{\begin{tabular}[c]{@{}l@{}}Time\\ (Seconds)\end{tabular}}} \\ \midrule
\multirow{3}{*}{Resnet-18} & 512    & \textbf{73.1\%} & 326.5 & {($1 \times$)} & 5096 & {($1 \times$)} \\ \cmidrule(l){2-7} 
                          & 4096    & 70.0\% & 40.39 & {($8 \times$)} & 1635 & {($3.1 \times$)}\\ \cmidrule(l){2-7} 
                          & \adasp & \textbf{73.3\%} & 54.96 & {(\textbf{5.5}$\mathbf{\times}$)} & 1852 & {(\textbf{2.7}$\mathbf{\times}$)} \\ \midrule
\multirow{3}{*}{GoogleNet}  & 512   & \textbf{77.0\%} & 182.1 & {($1 \times$)} & 12443 & {($1 \times$)} \\ \cmidrule(l){2-7} 
                          & 4096    & 73.7\% & 22.5 & {($8.1\times$)} & 5755 & {($2.1 \times$)} \\ \cmidrule(l){2-7}
                          & \adasp & \textbf{77.0\%} & 33.1 & {(\textbf{5.4}$\mathbf{\times}$)} & 6228 & {(\textbf{2.0}$\mathbf{\times}$)} \\ \midrule
\multirow{3}{*}{DenseNet}     & 512    & \textbf{73.7\%} & 30.126 & {($1 \times$)} & 14928 & {($1 \times$)} \\ \cmidrule(l){2-7} 
                          & 4096    & 70.0\% & 3.72 & {($8 \times$)} & 2775 & {($5.3 \times$)} \\ \cmidrule(l){2-7} 
                          & \adasp & \textbf{73.9\%} & 5.48 & {(\textbf{5.4}$\mathbf{\times}$)} & 3585 & {(\textbf{4.1}$\mathbf{\times}$)} \\ \bottomrule
                           
\end{tabular}}
\end{minipage}
\end{table*}
In Table~\ref{tab:cifar10batch} and~\ref{tab:cifar100batch} we show that \adasp\ is able to reach the same accuracy as small batch training without any hyper-parameter tuning. We modified no other parameter except scaling learning rate when switching Batch Size as described in Section~\ref{sec:hyper_param}. \adasp\ by switching between batch size of 512 and 4096 is able to save around $5.5\times$ in communications and up to $4.1\times$ reduction in wall clock training time. 

\subsection{Comparison with Prior Work}
We compare \adasp\ with prior work in adaptive gradient compression and adaptive batch size tuning. For adaptive gradient compression we consider recent work by \citet{guo2020acc} that uses the mean to standard deviation ratio (MSDR) of the gradients. If they observe that MSDR has reduced by a certain amount(a hyper-parameter), they correspondingly reduce the compression ratio by half (i.e., switch to a more accurate gradient). We use this approach with \powersgd\ and our experiments in Figure~\ref{fig:icassp_compare} suggest that their switching scheme ends up requiring more communication and also leads to some loss in accuracy.

\begin{figure*}[t]
    \begin{center}
    \begin{subfigure}[b]{0.45\textwidth}
        \includegraphics[width=\textwidth]{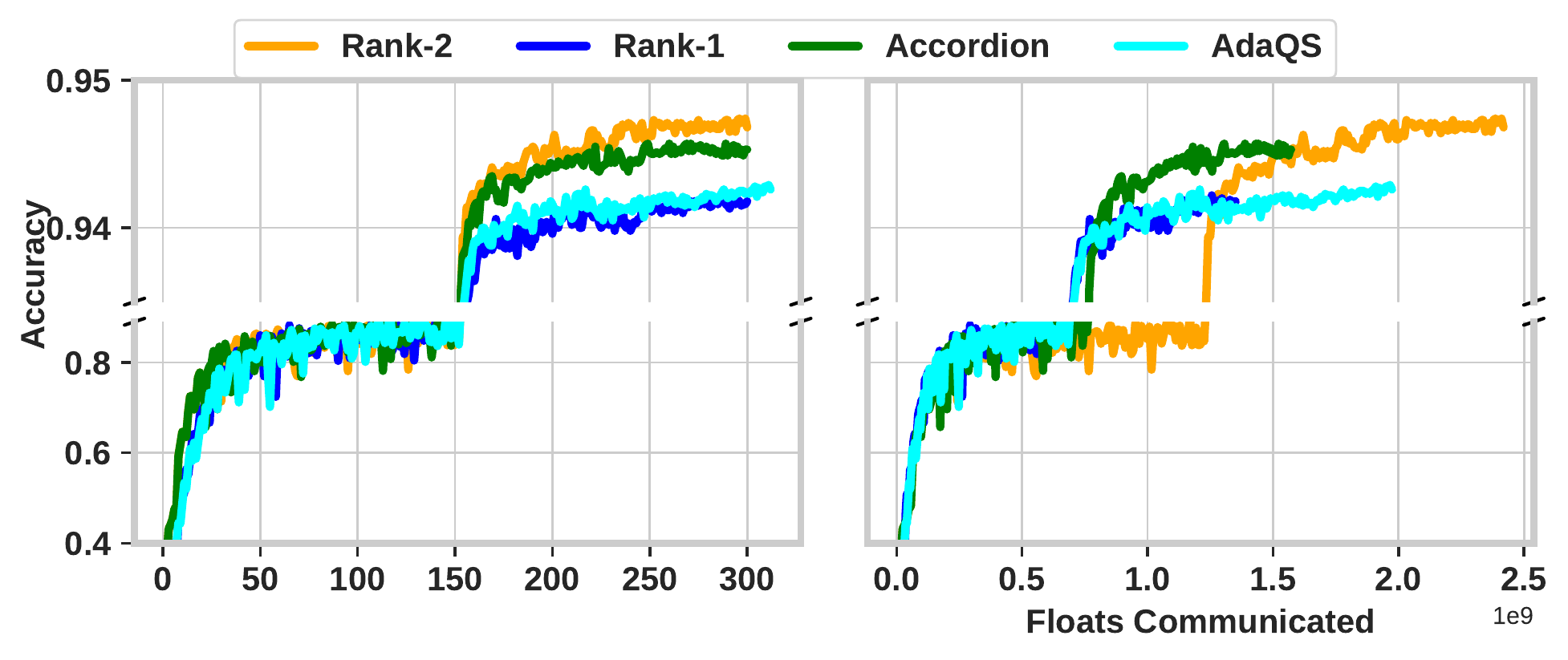}
        \caption{ResNet-18 trained on \cifart}
        \vspace{-0.1in}

        \label{fig:icasspcomparison}
        \end{subfigure}
    \begin{subfigure}[b]{0.45\textwidth}
        \includegraphics[width=\textwidth]{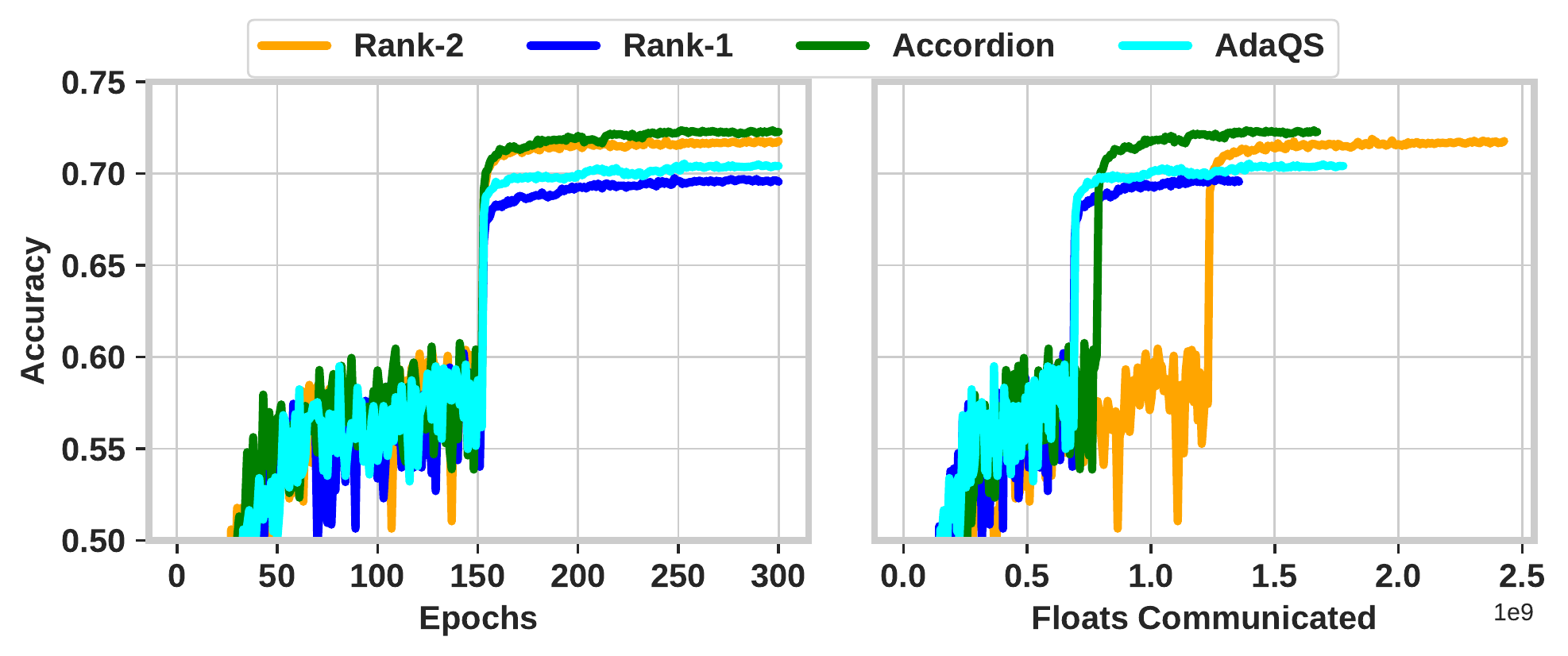}
        \caption{ResNet-18 trained of \cifarh}
        \vspace{-0.1in}
        \label{fig:cifarhicasspcompare}
    \end{subfigure}
    \end{center}
    \caption{\small{\textbf{Comparison with AdaQS:} We compare \adasp\ against AdaQS~\cite{guo2020acc} on \cifart\ and \cifarh. We use \powersgd as the Gradient Compressor. Even though AdaQS communicates more than \adasp\ it still looses accuracy compared to low compression. \adasp\ on the other hand with less communication is able to reach the accuracy of low compression. }}
    \label{fig:icassp_compare}
    \vspace{-0.1in}
\end{figure*}

For batch size we compare to ~\cite{smith2017don} in ~\Cref{fig:smith_compare}. We used the exact same setup as suggested by~\cite{smith2017don} and we use the \textit{Increased Initial Learning Rate} setting as shown in Figure 5 of their paper. We observe that \adasp\ reduces communication by $5.4\times$. On the other hand~\citet{smith2017don} only reduce communication by $2.2\times$. For \cifarh\ as shown in ~\Cref{fig:cifarhsmithcompare}  we observe that the approach presented by \citet{smith2017don} doesn't yield the same accuracy as small batch training. 
\begin{figure*}[t]
    \begin{center}
    \begin{subfigure}[b]{0.45\textwidth}
        \includegraphics[width=\textwidth]{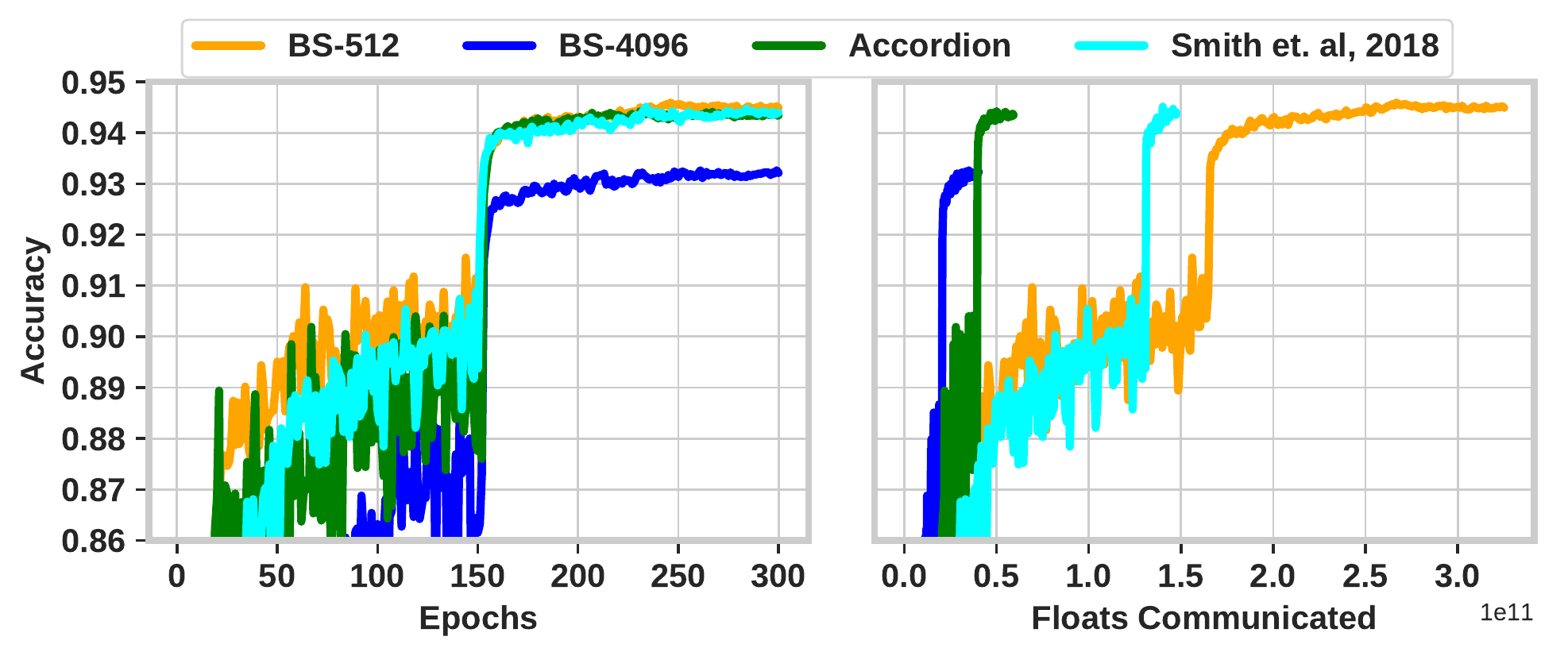}
        \caption{ResNet-18, trained on \cifart}
        \vspace{-0.1in}
        \label{fig:cifarsmithcomp}
    \end{subfigure}
    \begin{subfigure}[b]{0.45\textwidth}
        \includegraphics[width=\textwidth]{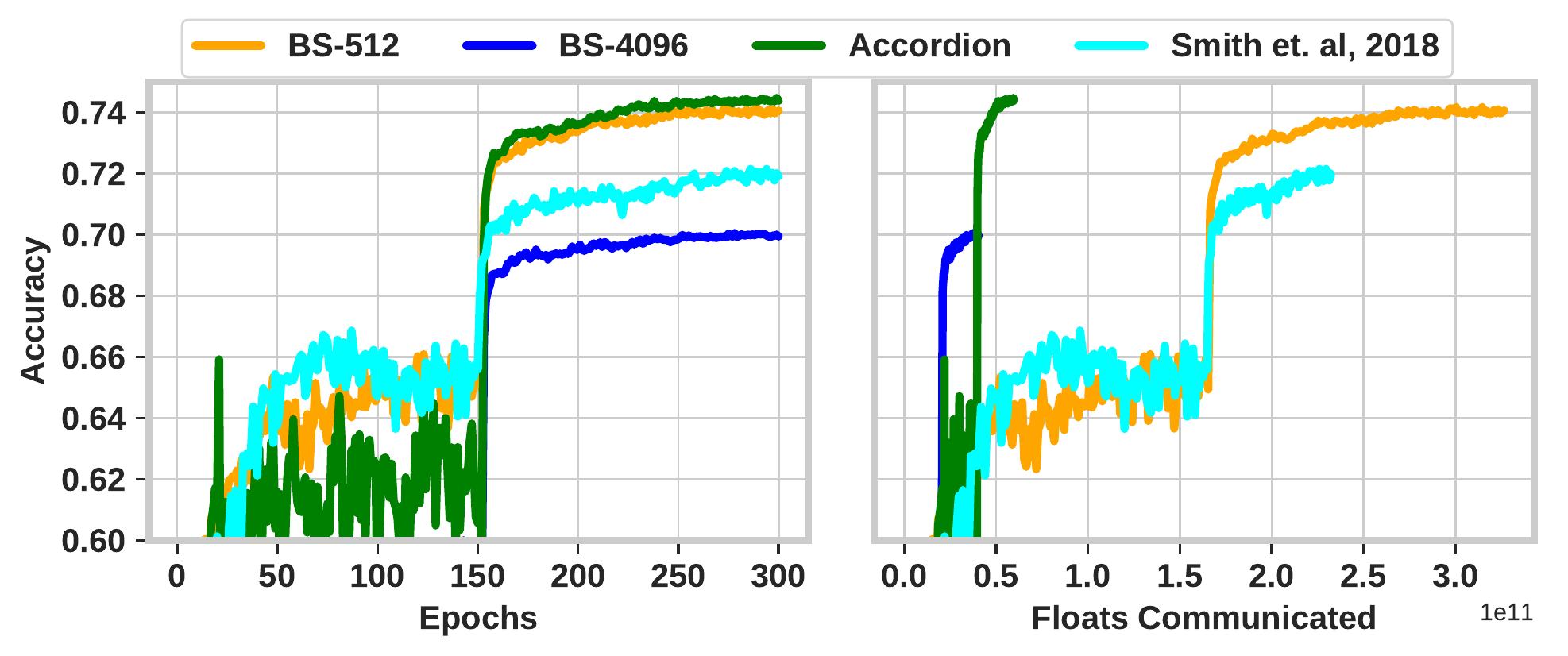}
        \caption{ResNet-18 trained on \cifarh}
        \vspace{-0.1in}
        \label{fig:cifarhsmithcompare}
    \end{subfigure}
    \end{center}
    \caption{\small{\textbf{\adasp\ compared with scheme proposed by ~\citet{smith2017don}:} We observe \adasp\ communicates around $3\times$ less than the scheme proposed by~\citet{smith2017don}. Moreover \adasp\ for both \cifarh\ and \cifart\ maintains same accuracy as using small batch size (high communication)}}
    \label{fig:smith_compare}
    \vspace{-0.1in}
\end{figure*}
Previous work~\cite{alistarh2017qsgd} has shown in theory that highly compressed gradients can reach the same accuracy as low compressed gradients when trained long enough. However, it only makes sense to run high compression if it can reach the same accuracy as low compression while communicating fewer bytes. To test this we ran ResNet-18~\cite{he2016deep} with \powersgd\ Rank-1 and Rank-2. We ran Rank-2 for 300 epochs and allowed Rank-1 to communicate the same amount as Rank-2. As observed in Figure~\ref{fig:equalfloats}, \powersgd\ Rank-1 cannot reach the same accuracy as \powersgd\ Rank-2. Moreover \adasp\ still achieves performance at par with low compression, while using a smaller communication budget.
\begin{figure}[t]
    \centering
    \begin{minipage}{0.485\textwidth}
    \centering
    \includegraphics[width=0.999\linewidth]{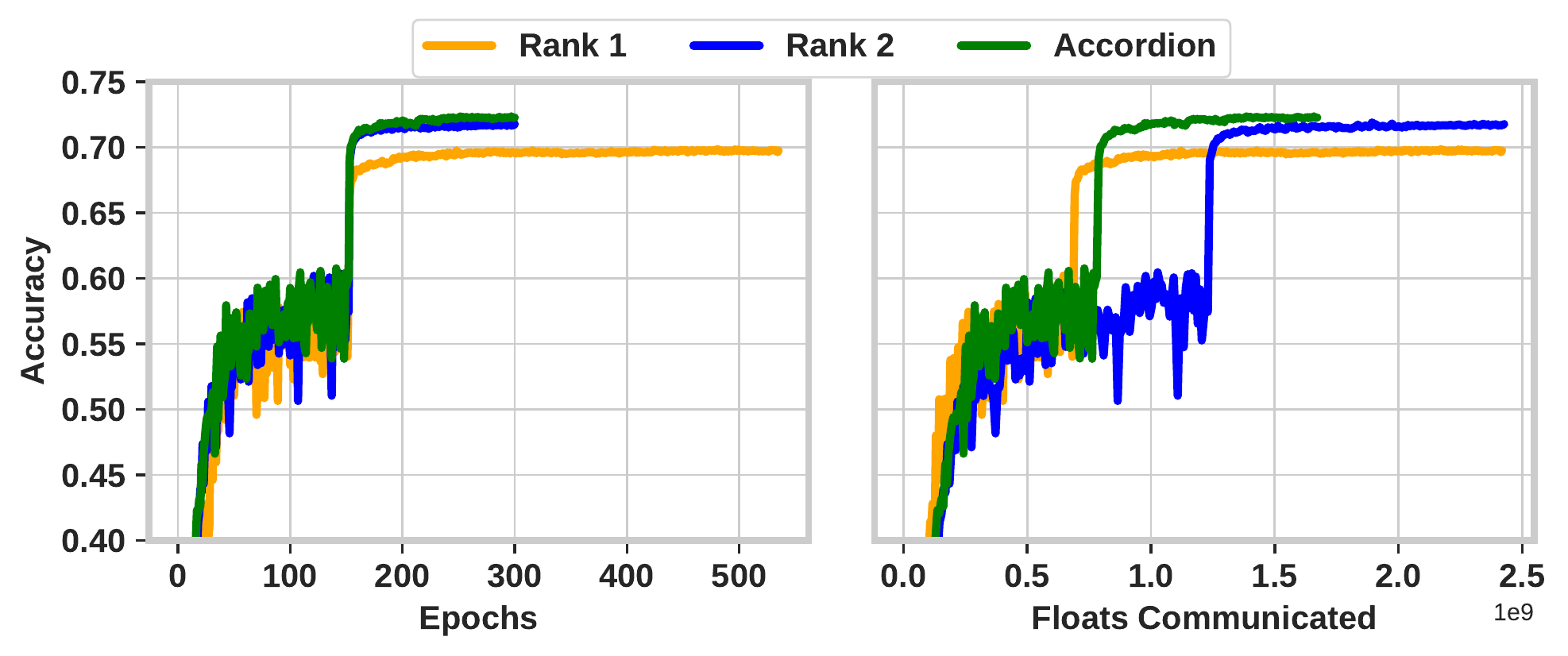}
    \caption{\small{\textbf{Evaluating high compression(Rank-1) training when allowed same communication budget as low compression(Rank-2):}ResNet-18 trained on \cifarh, using \powersgd. we observe that even when we allow highly compressed training to communicate same amount as low compressed training it is still not possible to get the same accuracy. Meanwhile \adasp\ is able to achieve the same accuracy as low compressed training but with much lesser communication 
    }}
    \label{fig:equalfloats}
    \end{minipage}\quad
    \begin{minipage}{0.485\textwidth}
    \centering
    \includegraphics[width=0.9\linewidth]{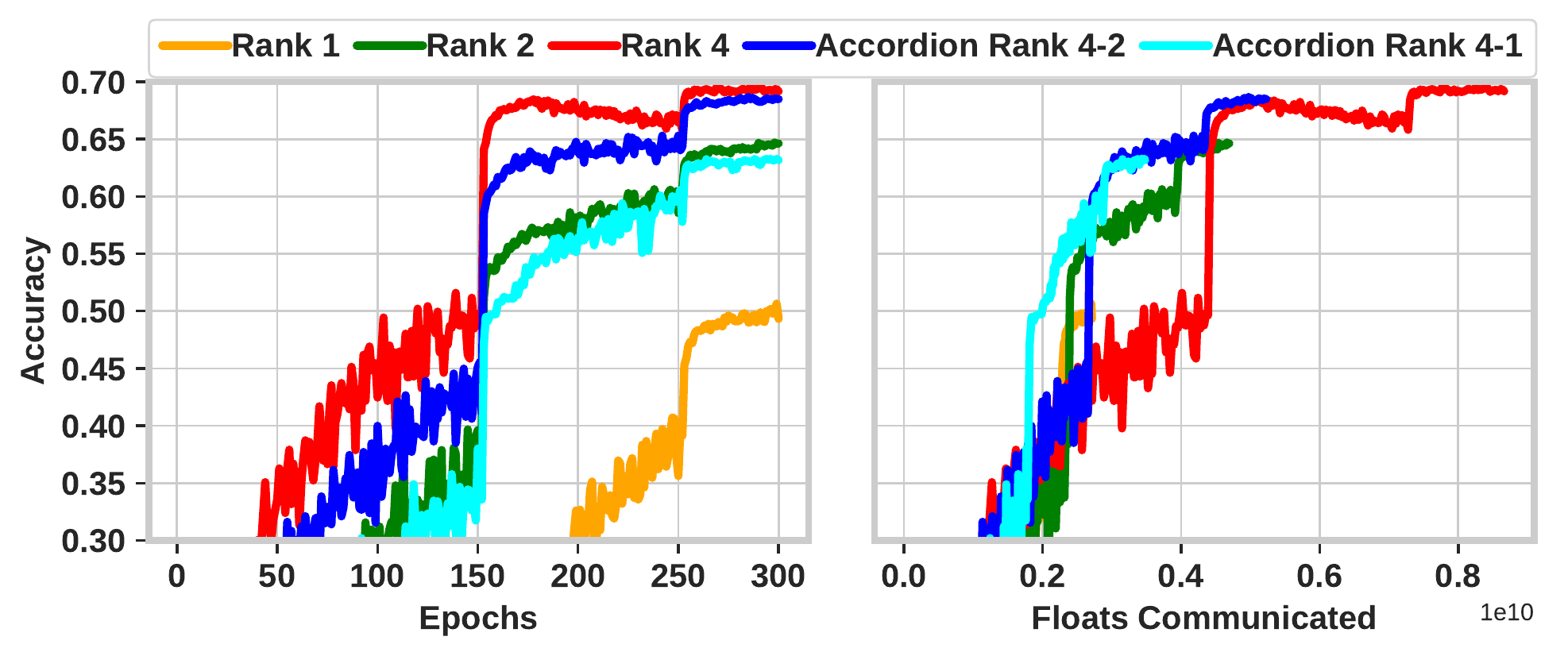}
    \caption{\small{\textbf{Limitation of \adasp:} For the specific case of VGG-19 trained on \cifarh, we observe that when \adasp\ switches between Rank-1(50\% accuracy) and Rank-4(68\% accuracy)  it only reaches accuracy of Rank-2(63\% accuracy) but when allowed to switch between Rank-2 and Rank-4 it reaches the accuracy of Rank-4 with around $2.3\times$ lesser communication. This shows that oftentimes the $\ell_{low}$ needs to be chosen carefully.}}
    \label{fig:limitation}
    \end{minipage}
\end{figure}

\section{Future Work and Limitations}
We have shown that \adasp\ provides significant benefits over static compression schemes without compromising accuracy or requiring additional hyper-parameter tuning. Next, we would like to point out some of the future directions and current limitations of our approach.
\begin{itemize}
    \item \textbf{Theoretical Understanding of Critical Regimes:} Although our work is motivated by several previous  works~\cite{jastrzebski2018on, Jastrzebski2020The, achille2018critical} which have discovered and analyzed critical regimes, building a better theoretical understanding of how change in gradient norm relates to critical regimes is an avenue for future work. 
    \item \textbf{Equivalence between batch size and gradient compression:} While we have shown that there might be a connection between batch size and gradient compression, rigorously verifying this connection can lead to better theoretical understanding of our technique.
    \item{\textbf{Choosing $\ell_{low}$, $\ell_{high}$, $B_{low}$ and $B_{high}$}:} Choosing the low and high compression ratios used by \adasp\ is currently left to the user. In case of \powersgd\ we chose $\ell_{low}$ based on the results of \citet{vogels2019powersgd} where authors showed Rank 2 and 4 achieved the same accuracy as syncSGD, making Rank 1 the natural choice for $\ell_{high}$. Similarly for TopK we chose $\ell_{low}$ to be close to SGD and $\ell_{high}$ to provide significant communication saving. However these settings do not work for all models. For example, in case of VGG19 on Cifar100 with \powersgd\ we observed that using $\ell_{high} = Rank 1$ leads to a model with very low accuracy(50\%). In that case \adasp\ cannot match the accuracy of $\ell_{low} = Rank 4$ as shown in Figure~\ref{fig:limitation}. Automating these choices has the potential of making gradient compression techniques much more user friendly and is an avenue for future work. 
    \item{\textbf{Jointly adapting batch size and gradient compression}} In this work, we study gradient compression and batch size scaling independently. Understanding how to vary both of them in tandem might lead to even large gains in the future. 
\end{itemize}
\section{Conclusion}
In this paper we propose \adasp, an adaptive gradient compression method that can automatically switch between low and high compression.
\adasp\ works by choosing low compression in critical regimes of training and high compression elsewhere. We show that such regimes can be efficiently identified using the rate of change of the gradient norm and that our method matches critical regimes identified by prior work. We also discuss connections between the compression ratio and batch size used for training and show that the insights used in \adasp\ are supported by prior work in adaptive batch size tuning. Finally, we show that \adasp\ is effective in practice and can save upto 3.7$\times$ communication compared to using low compression without affecting generalization performance.
Overall, our work provides a new principled approach for building adaptive-hyperparameter tuning algorithms, and we believe that further understanding of critical regimes in neural network training can help us design better hyperparameter tuning algorithms in the future.




\bibliographystyle{abbrvnat}

\bibliography{ref}

\appendix
\section{Detailed Experimental Settings}
\label{sec:experimental_details}
\paragraph*{Data preprocessing}
For preprocessing the images of \cifart\ and \cifarh\ datasets, we follow the standard data augmentation and normalization process. For data augmentation, random cropping and horizontal random flipping are used. Each color channel is normalized with it's mean and standard deviation. Where $\mu_r = 0.49, \mu_g = 0.48, \mu_b =  0.45$ are mean of the red, green and blue channels respectively. And $\sigma_r = 0.25, \sigma_g = 0.24, \sigma_b = 0.26$ are corresponding standard deviations. Each channel pixel is normalized by subtracting the mean value in this color channel and then divided by the standard deviation of this color channel. For pre-processing \wikitext\ we used the default english tokenizer in Spacy~\cite{spacy2}. 

\paragraph*{Hyperparameters}
\begin{table*}[h]
\caption{Hyperparameters}
\label{table:hyperparam}
\begin{tabularx}{\linewidth}{lXX}
    \toprule
    Dataset &\cifart\ and \cifarh  & \wikitext\\
    \midrule
    LR & 0.1 $\times$ Number of Workers.
         & 2.5 $\times$ Number of Workers\\
    LR Decay & $/10$ at epoch 150 and 250 & $/10$ at epoch 60 and 80\\ 
    LR warmup & Linearly for the first 5 epochs, starting from 0.1 & Linearly for the first 5 epochs, starting from 2.5\\
    Total Epochs & 300 & 90\\
    Optimizer & Nesterov & Nesterov \\
    Momentum & 0.9 & 0.9\\
    Repetition & 3 times with different random seeds & 3 times with different random seeds\\
    Error Bars & 95\% Confidence Interval & 95\% Confidence Interval\\
    \bottomrule
    
\end{tabularx}
\end{table*}

For training we used the standard hyper-parameters from prior work. We used \powersgd\ with memory term as suggested in~\cite{vogels2019powersgd} and used the same learning rate schedule. Table~\ref{table:hyperparam} provides details of the hyper-parameters used in our experiments. We used learning rate warmup as suggested by Goyal et al.~\cite{goyal2017accurate} for all our baselines as well as \adasp. We start with learning rate of 0.1 and linearly scale the learning rate 5 epochs to $0.1\times \frac{Batch Size}{128}$. 
\paragraph*{Additional Details for Batch Size experiment}
When trying to run extremely large batch sizes on 4 \textit{p3.2xlarge} we started running out of memory. To make sure that our communication overhead for each round remains same instead of using more GPU's, we simulated large batch size in Pytorch~\cite{paszke2019pytorch}. Which means we did multiple backward passes to accumulate the gradients before communicating and applying them to the weights.Moreover for training stability as done by ~\cite{yao2018large} we only allow \adasp\ to increase batch size.
\section{Connection Between Gradient Compression and Batch Size}
\label{sec:gradcompreesion_connection_batch_size}
The connection between gradient compression and batch size tuning can be made more formal under the following assumption: ``each stochastic gradient is the sum of a sparse mean and a dense noise'', i.e.,\
\begin{equation}
\begin{split}
\nabla_w \ell(w;x_i,y_i)& = \underbrace{\mathbb{E}_j \nabla_w \ell(w;x_j,y_j)}_{\text{sparse, large magnitudes}} +\underbrace{(\nabla_w \ell(w;x_i,y_i)-\mathbb{E}_j \nabla_w \ell(w;x_j,y_j))}_{\text{dense, small magnitudes}}
\end{split}
\end{equation}

Under this assumption, we can see that ``large batch gradient $\approx$ highly compressed gradient'', as a large batch gradient will be close to $\mathbb{E}_j \nabla_w \ell(w;x_j,y_j)$ by the law of large numbers, and a highly compressed gradient will also pick up the same sparse components. 
Similarly, a small batch gradient is equivalent to weakly compressed gradient. 

We show that the above assumption on gradient properties can hold for limited scenarios by considering a simple LASSO example.
Consider a model whose goal is to minimize $\sfrac{1}{2}\cdot\|Xw - y\|_2^2 + \lambda \|w\|$, where positive-class data points $x_{+} \sim \mathcal{N}(\mu, \sigma^2 I)$, negative-class data points $x_{-} \sim \mathcal{N}(-\mu, \sigma^2 I)$, and $P(Y=+1)=P(Y=-1)=\sfrac{1}{2}$.
Here, $w$ is sparse for a properly chosen value of $\lambda$ due to the shrinkage operation~\cite{lasso}.
Then, we have the following lemma, which implies that the gradient modeling described above holds w.h.p.

\begin{lemmat}
If $\mu$ is $k_1$-sparse and $w$ is $k_2$-sparse, $\mathbb{E}_j \nabla_w \ell(w;x_j,y_j)$ is $k_1 + k_2$-sparse. Let us denote by $\gamma$ the minimum absolute value of non-zero entries of $\mathbb{E}_j \nabla_w$.
Then, for any positive integer $n$ and $\epsilon > 0$, there exists a sufficiently small enough $\sigma > 0$ such that $\mathbb{P}(\max_{i \in [n]}\|\nabla_w \ell(w;x_i,y_i)-\mathbb{E}_j \nabla_w \ell(w;x_j,y_j)\|_{\infty} < \gamma) \geq 1-\epsilon$.
\end{lemmat}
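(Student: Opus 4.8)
The plan is to split the per-example gradient into a deterministic part and a stochastic part, and then show the stochastic fluctuation shrinks to zero with $\sigma$. First I would write the gradient of $\tfrac{1}{2}(x_i^\top w - y_i)^2 + \lambda\|w\|$ as $\nabla_w\ell(w;x_i,y_i) = (x_i^\top w - y_i)\,x_i + \lambda\,\partial\|w\|$ and note that the regularizer term is deterministic, so it cancels entirely in the centered quantity $\nabla_w\ell(w;x_i,y_i) - \mathbb{E}_j\nabla_w\ell(w;x_j,y_j)$. Hence the fluctuation is governed solely by $g(x,y) := (x^\top w - y)\,x$, and I only need to control $g(x_i,y_i) - \mathbb{E}_j\, g$.

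Next I would compute $\mathbb{E}_j\, g$ over the balanced mixture. Writing a positive example as $x = \mu + \sigma\xi$ with $\xi\sim\mathcal{N}(0,I)$ and using $\mathbb{E}[\xi]=0$ and $\mathbb{E}[\xi\xi^\top]=I$, a short calculation gives $\mathbb{E}_{x_+}[g] = (\mu^\top w - 1)\mu + \sigma^2 w$; the negative class yields the same value by symmetry, so $\mathbb{E}_j\, g = (\mu^\top w - 1)\mu + \sigma^2 w$. The first summand is supported on $\mathrm{supp}(\mu)$ (size $\le k_1$) and the second on $\mathrm{supp}(w)$ (size $\le k_2$); adding back the $\lambda$-subgradient, which is also supported on $\mathrm{supp}(w)$, the full mean gradient has support of size at most $k_1+k_2$. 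This establishes the sparsity claim and pins down $\gamma$ as the smallest nonzero magnitude of this vector.

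For the fluctuation I would expand $g(\mu+\sigma\xi,+1) - \mathbb{E}_j\, g$ and observe that after the $\sigma$-free term $(\mu^\top w - 1)\mu$ cancels, every remaining summand carries a factor $\sigma$ or $\sigma^2$: explicitly $(\mu^\top w - 1)\sigma\xi + \sigma(\xi^\top w)\mu + \sigma^2(\xi^\top w)\xi - \sigma^2 w$, with the same structure up to signs for negative examples. Taking $\ell_\infty$ norms and using the triangle inequality bounds the deviation by a polynomial in $\sigma$ with no constant term, whose coefficients involve only $\|\xi\|_\infty$, $|\xi^\top w|$, $\|\mu\|_\infty$, and $\|w\|_\infty$.

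The step requiring the most care, and the main obstacle, is the order of quantifiers. I would view $x_i = \pm\mu + \sigma\xi_i$ with the $\xi_i\sim\mathcal{N}(0,I)$ fixed and \emph{independent} of $\sigma$. A union bound over the $nd$ Gaussian coordinates and the $n$ projections $\xi_i^\top w$ yields constants $M_1,M_2$ (depending on $n,d,\epsilon,\|w\|_2$ but not on $\sigma$) such that the event $E = \{\max_i\|\xi_i\|_\infty \le M_1,\ \max_i|\xi_i^\top w|\le M_2\}$ has probability at least $1-\epsilon$. On $E$ the uniform deviation $\max_{i\in[n]}\|g(x_i,y_i)-\mathbb{E}_j\, g\|_\infty$ is at most some $C(\sigma)$ with $C(\sigma)\to 0$ as $\sigma\to 0$; choosing $\sigma$ small enough that $C(\sigma)<\gamma$ then gives $\mathbb{P}(\max_i\|\cdot\|_\infty<\gamma)\ge\mathbb{P}(E)\ge 1-\epsilon$, which is exactly the claim. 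The key is to fix the high-probability Gaussian bound first, uniformly in $\sigma$, and only afterward pick $\sigma$ against the fixed threshold $\gamma$.
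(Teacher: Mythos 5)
Your proposal is correct, and it reaches the two claims (sparsity of the mean, and the vanishing of the fluctuation as $\sigma \to 0$) by a route that differs from the paper's in the concentration step. The paper bounds each coordinate of each sample's deviation by Chebyshev's inequality, showing $\mathrm{Var}\bigl[(\nabla_w \ell(w;x_i,y_i))_j\bigr] \leq (\sigma^4 + 2\|\mu\|_{\max}^2\sigma^2)\|w\|_2^2 + \sigma^2 \to 0$, and then takes a union bound over the $nd$ coordinate--sample pairs so that each failure probability is at most $\epsilon/(nd)$. You instead couple all values of $\sigma$ on one probability space via $x_i = \pm\mu + \sigma\xi_i$ with $\xi_i \sim \mathcal{N}(0,I)$ fixed, establish a high-probability event $E$ controlling $\max_i\|\xi_i\|_\infty$ and $\max_i|\xi_i^\top w|$ \emph{uniformly in} $\sigma$, and then observe that on $E$ the deviation is bounded by a deterministic polynomial in $\sigma$ with no constant term. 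Both arguments are valid; the paper's is more elementary (second moments only), while yours makes the order of quantifiers — fix the tail event first, then shrink $\sigma$ against the fixed threshold $\gamma$ — fully explicit, which is the one place where a careless version of this proof could go wrong. Your computation of the mean gradient, $\mu(\mu^\top w) - \mu + \sigma^2 w + \lambda\,\mathrm{sign}(w)$, is also slightly more careful than the paper's, which writes $\mathbb{E}[x_ix_i^\top] = I + \mu\mu^\top$ in place of $\sigma^2 I + \mu\mu^\top$ and silently drops the term $\mathbb{E}[x_iy_i] = \mu$; neither slip affects the $k_1+k_2$-sparsity conclusion, and your version confirms it. One shared caveat, present in the paper as well: the mean (and hence $\gamma$) depends mildly on $\sigma$ through the $\sigma^2 w$ term, so strictly one should note that $\gamma$ stays bounded away from zero as $\sigma \to 0$ (the entries tend to those of $\mu(\mu^\top w) - \mu + \lambda\,\mathrm{sign}(w)$) before choosing $\sigma$ against it.
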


\begin{proof}
We first show that $\mathbb{E}_j \nabla_w \ell(w;x_j,y_j)$ is $k_1 + k_2$-sparse. Since $\nabla_w \ell(w;x_i,y_i) = x_i (x_i^\top w) - x_i y_i + \lambda \text{sign}(w)$, we have
\begin{align*}
\mathbb{E} \nabla_w \ell(w;x_j,y_j) &= \mathbb{E} [x_i (x_i^\top w) - x_i y_i + \lambda \text{sign}(w)] \\&= \mathbb{E} [x_i x_i^\top] w + \lambda \text{sign}(w).
\end{align*}
Since $\mathbb{E}[x_i x_i^\top] = I + \mu\mu^\top$, 
\begin{align*}
\mathbb{E} \nabla_w \ell(w;x_j,y_j) &= (I + \mu \mu^\top) w + \lambda \text{sign}(w) \\&= w + \lambda \text{sign}(w) + \mu (\mu^\top w).
\end{align*}
The first two terms are $k_1$-sparse sharing the support. 
The sparsity of the last term is upper bounded by the that of $\mu$, hence $k_2$-sparse.

We now prove $\mathbb{P}(\max_{i \in [n]}\|\nabla_w \ell(w;x_i,y_i)-\mathbb{E}_j \nabla_w \ell(w;x_j,y_j)\|_{\infty} < \gamma) \geq 1-\epsilon$.
Note that it is sufficient to show $\mathbb{P}(\|\nabla_w \ell(w;x_i,y_i)-\mathbb{E}_j \nabla_w \ell(w;x_j,y_j)\|_{\infty} \geq \gamma) \leq \epsilon/n$ for all $i$.
Therefore, it is sufficient to show $\mathbb{P}((\nabla_w \ell(w;x_i,y_i)-\mathbb{E}_j \nabla_w \ell(w;x_j,y_j))_j \geq \gamma) \leq \epsilon/(nd)$ for all $i,j$, where $d$ is the dimension of the model parameter.
By Chebyshev's inequality, we have $\mathbb{P}((\nabla_w \ell(w;x_i,y_i)-\mathbb{E}_j \nabla_w \ell(w;x_j,y_j))_j \geq \gamma) \leq \frac{\mathrm{Var}[(\nabla_w \ell(w;x_i,y_i)_j]}{\gamma^2}$.
It is easy to see that $\mathrm{Var}[(\nabla_w \ell(w;x_i,y_i)_j] \leq (\sigma^4 + 2\|\mu\|_\text{max}^2 \sigma^2)\|w\|_2^2 + \sigma^2$, which converges to $0$ as $\sigma \rightarrow 0$.
Therefore, one can always find a small enough $\sigma > 0$ such that $\frac{(\sigma^4 + 2\|\mu\|_\text{max}^2 \sigma^2)\|w\|_2^2 + \sigma^2}{\gamma^2} = \frac{\epsilon}{nd}$, which is a sufficient condition for the desired inequality. 
\end{proof}

\section{\adasp\ on Extremely Large Batch Size}
To push the limits of Batch Size scaling further we tried using \adasp\ for scaling \cifart\ on ResNet-18 to batch size of 16,384.We observed that using \adasp\ looses around (1.6\%) accuracy compared to using batch size 512. Interestingly we also observe that when \adasp\ first switches the batch size there is a rapid drop, but then training immediately recovers. 
 \begin{figure}[hb]
     \centering
     \includegraphics[width=0.7\linewidth]{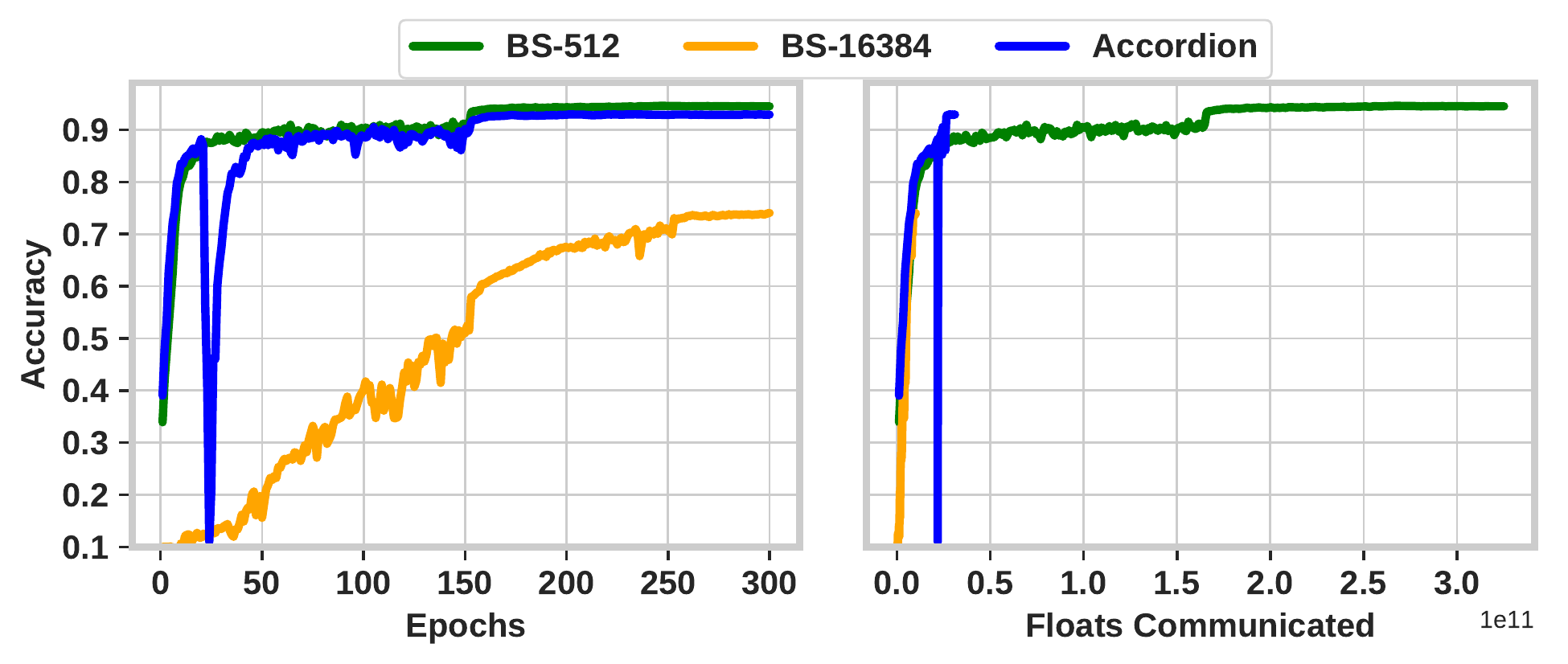}
     \caption{\small{\textbf{Using Extremely Large Batch Size:} We observe that \adasp\ looses around 1.6\% accuracy when we use batch size of 16,384. Showing \adasp\ can often prevent large accuracy losses while providing massive gains. }}
 \end{figure}

\section{Results and Detailed Analysis}
\label{app:addition_results}
We present detailed analysis with error bars for the results presented in ~\Cref{tab:cifar10psgd,tab:cifar100psgd,tab:cifar10topk,tab:cifar100topk}.
\subsection{Language Model}
\Cref{fig:topk_lstm} shows \adasp's performance for training a 2 Layer LSTM on \wikitext. By automatically switching between \topk99\% and \topk2\% \adasp\ is able to bridge the perplexity score and achieve the same accuracy as \topk99\% with significantly less overall communication.
\begin{figure*}[!htb]
    \begin{center}
         \includegraphics[width=\textwidth]{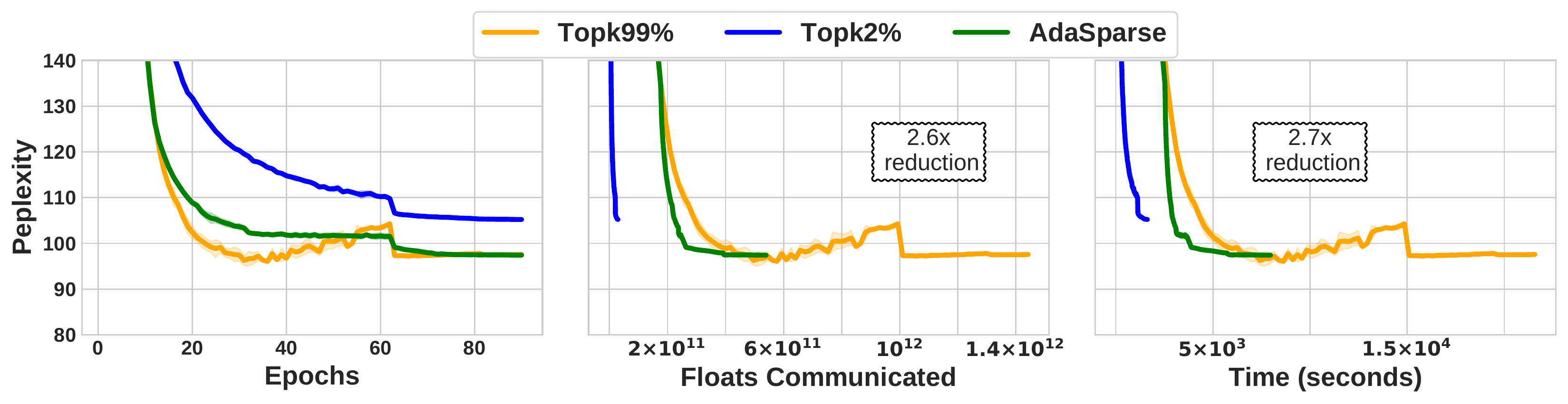}
    \end{center}
    \caption{\small{\textbf{\adasp\ using \topk\ with \elltopkl{99} and \elltopkh{2} for training LSTM on \wikitext} 
    (\textbf{left:}) Perplexity vs Epochs,  (\textbf{center:}) Perplexity vs Floats Communicated, (\textbf{right:}) Perplexity vs Time(seconds): \adasp\ significantly reduces total communication and training time compared to using \elltopkl{99} throughtout training}}
    \label{fig:topk_lstm} 
\end{figure*}

\subsection{Computer Vision Models}
We present graphs corresponding to the results stated in~\Cref{tab:cifar10psgd,tab:cifar100psgd,tab:cifar10topk,tab:cifar100topk} in the main text. In  ~\Cref{fig:powersgd_cifar10,fig:powersgd_cifar100,fig:topk_cifar10,fig:topk_cifar100} we provide details on \adasp's performance with error bars.
\begin{figure*}[!t]
    \caption{\small{\textbf{\adasp\ on Computer Vision Models trained on \cifart\ using \powersgd}}}
    \label{fig:powersgd_cifar10}
    \begin{center}
    \begin{subfigure}[b]{\textwidth}
        \includegraphics[width=\textwidth]{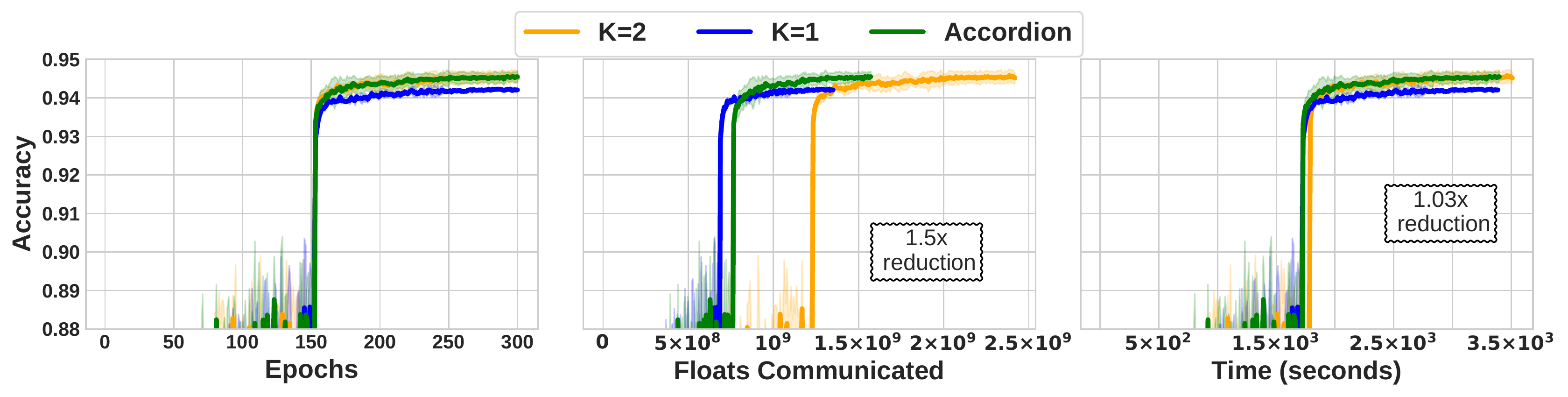}
        \caption{\small{{ResNet-18 trained using \powersgd\ with \ellpsgdl{4} and \ellpsgdh{1} for training}}}
        \label{fig:psgd:res18}
    \end{subfigure}
    \begin{subfigure}[b]{\textwidth}
        \includegraphics[width=\textwidth]{Figures/psgd_vgg19bn_cifar10_app.pdf}
        \caption{\small{{VGG-19bn trained using \powersgd\ with \ellpsgdl{4} and \ellpsgdh{1} for training}}}
    \end{subfigure}
     \begin{subfigure}[b]{\textwidth}
     \includegraphics[width=\textwidth]{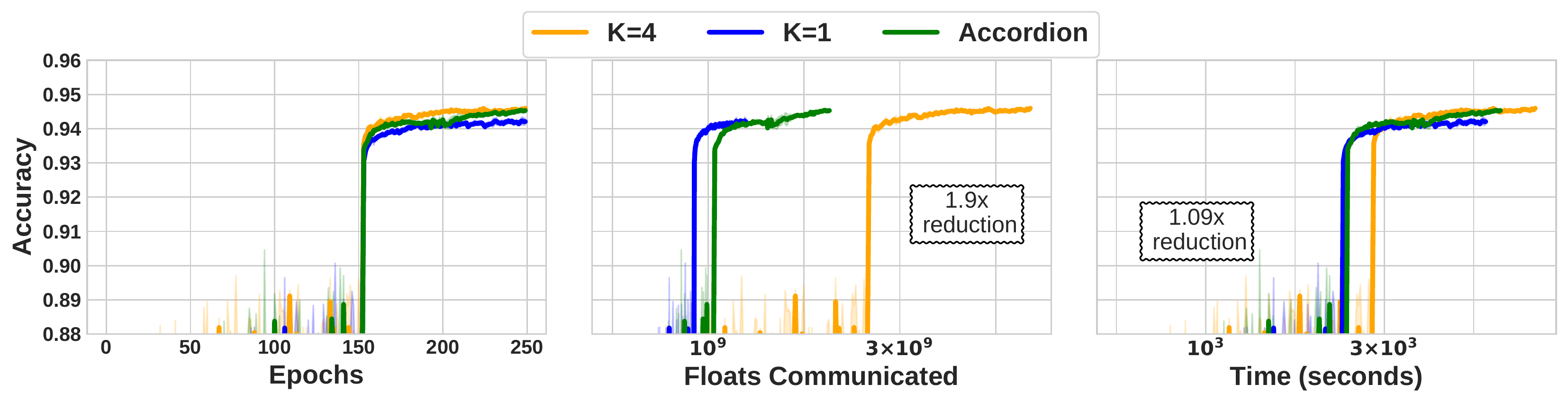}
      \caption{\small{{SeNet trained using \powersgd\ with \ellpsgdl{4} and \ellpsgdh{1} for training }}}
     \end{subfigure}
    \end{center}
\end{figure*}

\begin{figure*}[!t]
\caption{\small{\textbf{\adasp\ on Computer Vision Models trained on \cifarh\ using \powersgd}}}
 \label{fig:powersgd_cifar100}
    \begin{center}
    \begin{subfigure}[b]{\textwidth}
        \includegraphics[width=\textwidth]{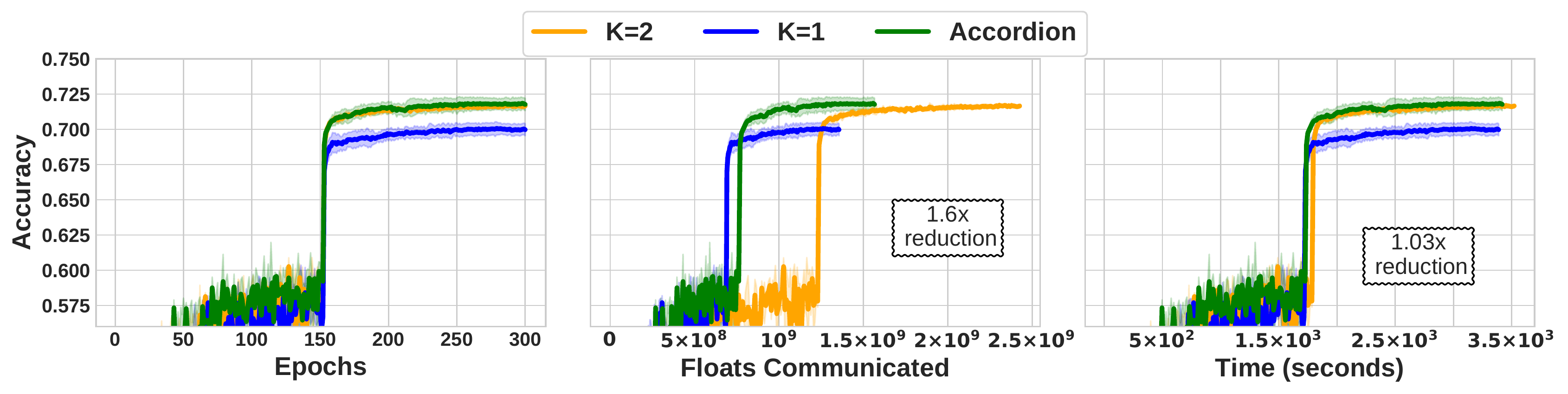}
        \caption{\small{{ResNet-18 trained using \powersgd\ with \ellpsgdl{2} and \ellpsgdh{1} for training }}}
    \end{subfigure}
    \begin{subfigure}[b]{\textwidth}
        \includegraphics[width=\textwidth]{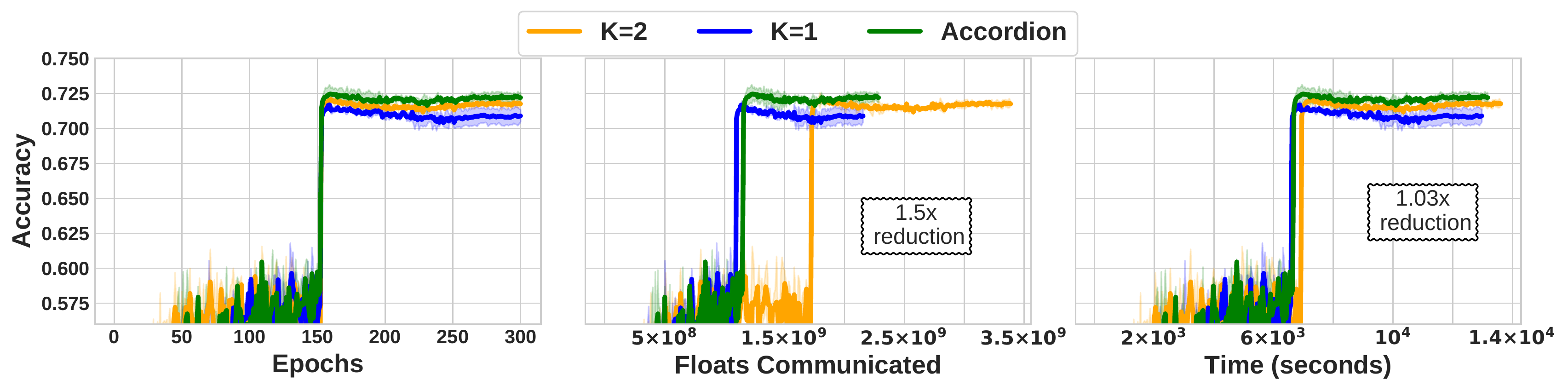}
        \caption{\small{{DenseNet trained using \powersgd\ with \ellpsgdl{2} and \ellpsgdh{1} for training }}}
    \end{subfigure}
    \begin{subfigure}[b]{\textwidth}
        \includegraphics[width=\textwidth]{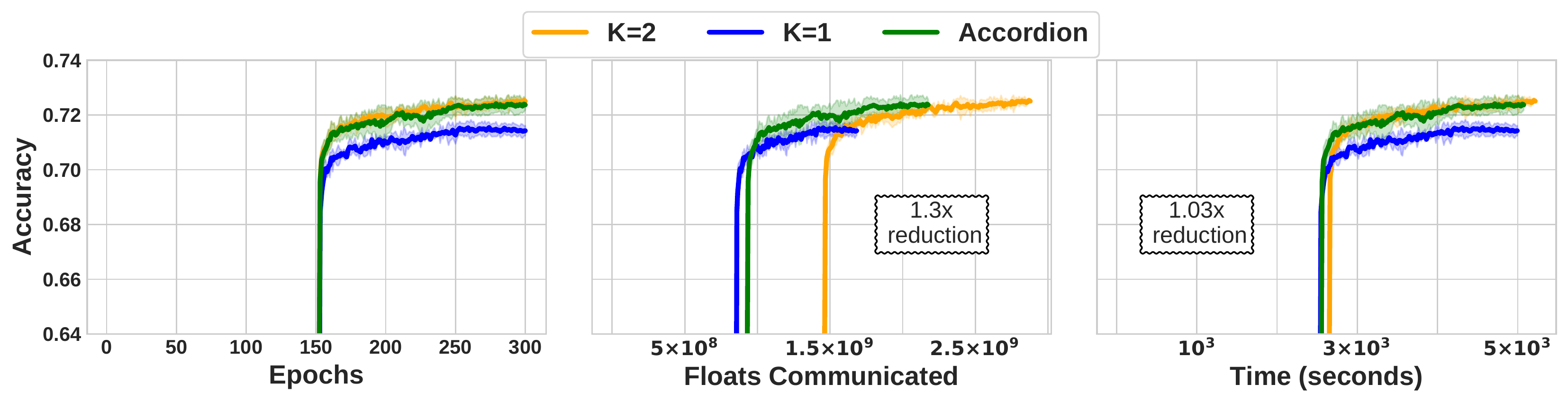}
        \caption{\small{{SeNet trained using \powersgd\ with \ellpsgdl{2} and \ellpsgdh{1} for training }}}
    \end{subfigure}
    \end{center}
\end{figure*}
\begin{figure*}[!t]
\caption{\small{\textbf{\adasp\ on Computer Vision Models trained on \cifart\ using \topk}}}
    \label{fig:topk_cifar10}
    \begin{center}
        \begin{subfigure}[b]{\textwidth}
            \includegraphics[width=\textwidth]{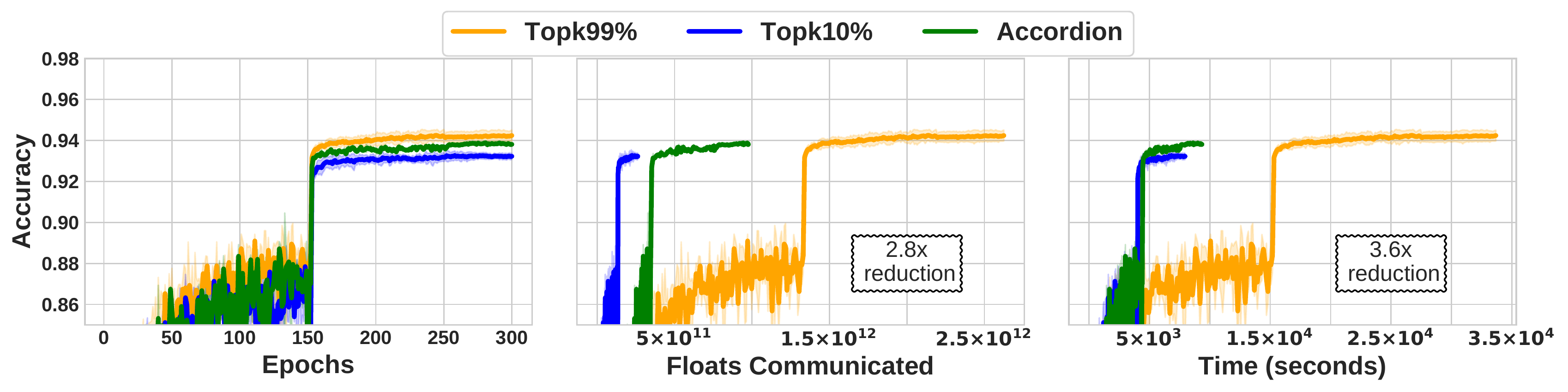}
            \caption{\small{{ResNet-18 trained using \topk\ with \elltopkl{99} and \elltopkh{10} for training }}}
        \end{subfigure}
        \begin{subfigure}[b]{\textwidth}
            \includegraphics[width=\textwidth]{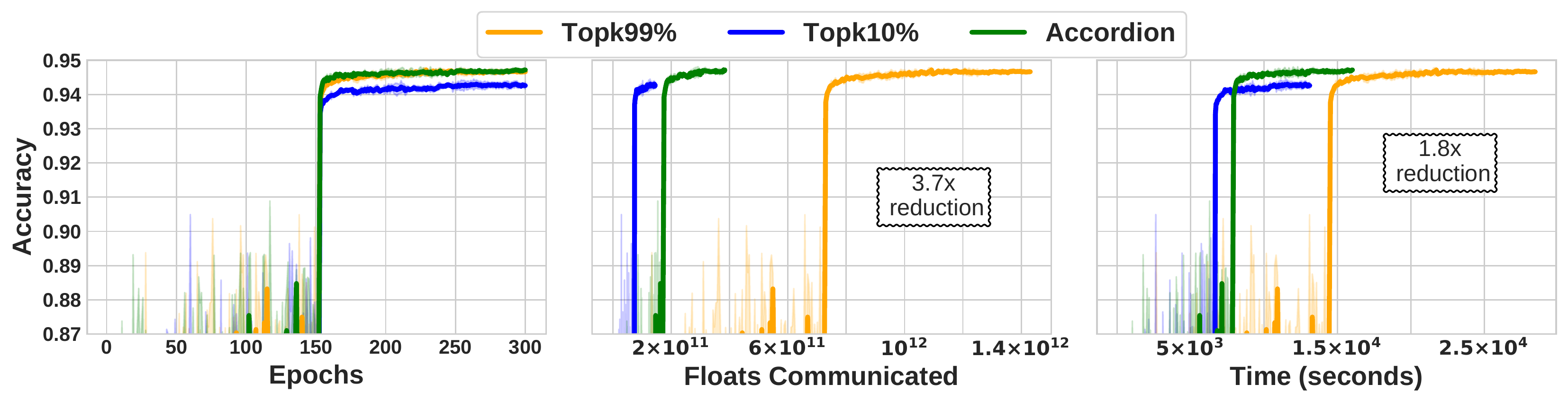}
            \caption{\small{{GoogLeNet trained using \topk\ with \elltopkl{99} and \elltopkh{10} for training }}}
        \end{subfigure}
         \begin{subfigure}[b]{\textwidth}
            \includegraphics[width=\textwidth]{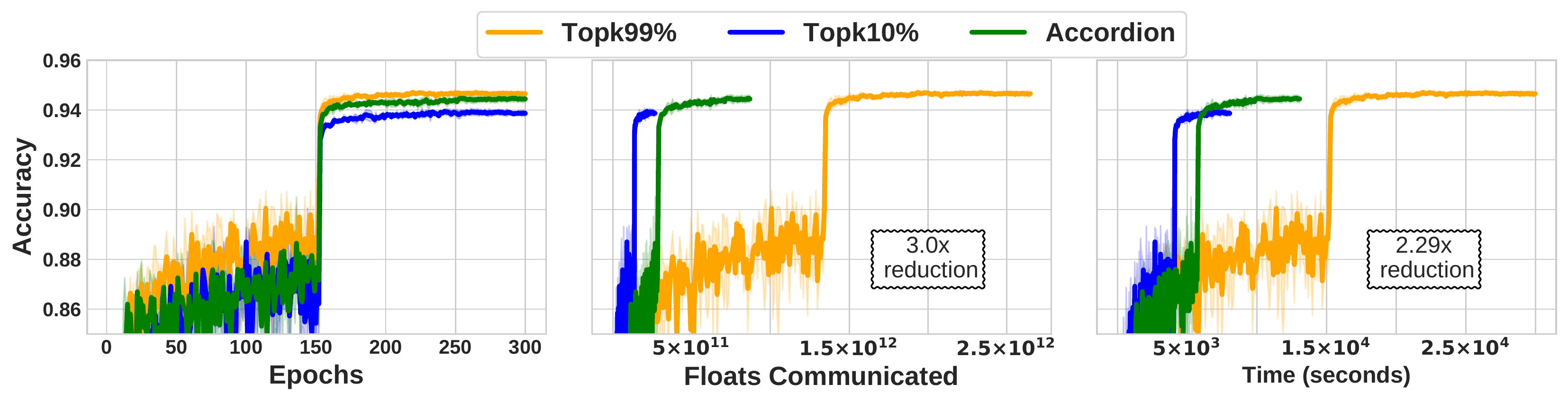}
            \caption{\small{{SeNet trained using \topk\ with \elltopkl{99} and \elltopkh{10} for training }}}
        \end{subfigure}
    \end{center}
\end{figure*}

\begin{figure*}[!t]
\caption{\small{\textbf{\adasp\ on Computer Vision Models trained on \cifarh\ using \topk:} }}
    \label{fig:topk_cifar100}
    \begin{center}
    \begin{subfigure}[b]{\textwidth}
        \includegraphics[width=\textwidth]{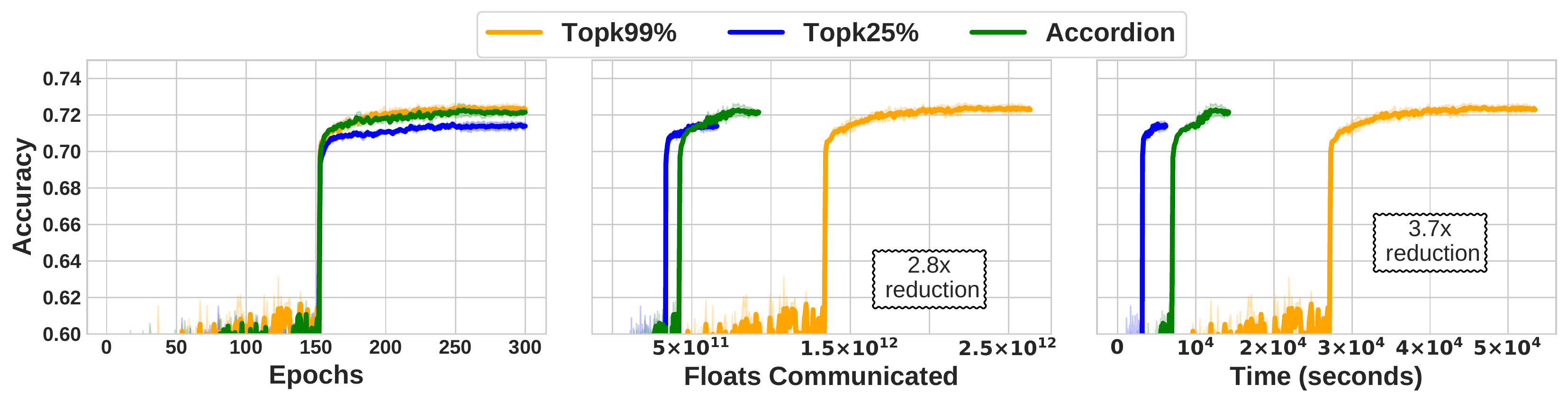}
        \caption{\small{{ResNet-18 trained using \topk\ with \elltopkl{99} and \elltopkh{25} for training }}}
    \end{subfigure}
     \begin{subfigure}[b]{\textwidth}
        \includegraphics[width=\textwidth]{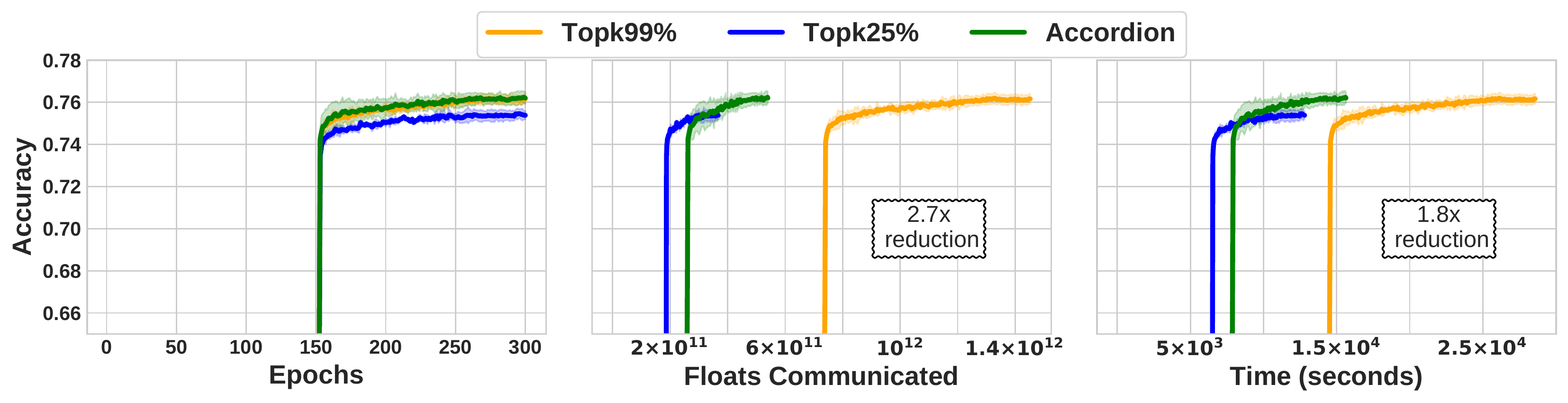}
        \caption{\small{{GoogLeNet trained using \topk\ with \elltopkl{99} and \elltopkh{25} for training }}}
    \end{subfigure}
    \begin{subfigure}[b]{\textwidth}
        \includegraphics[width=\textwidth]{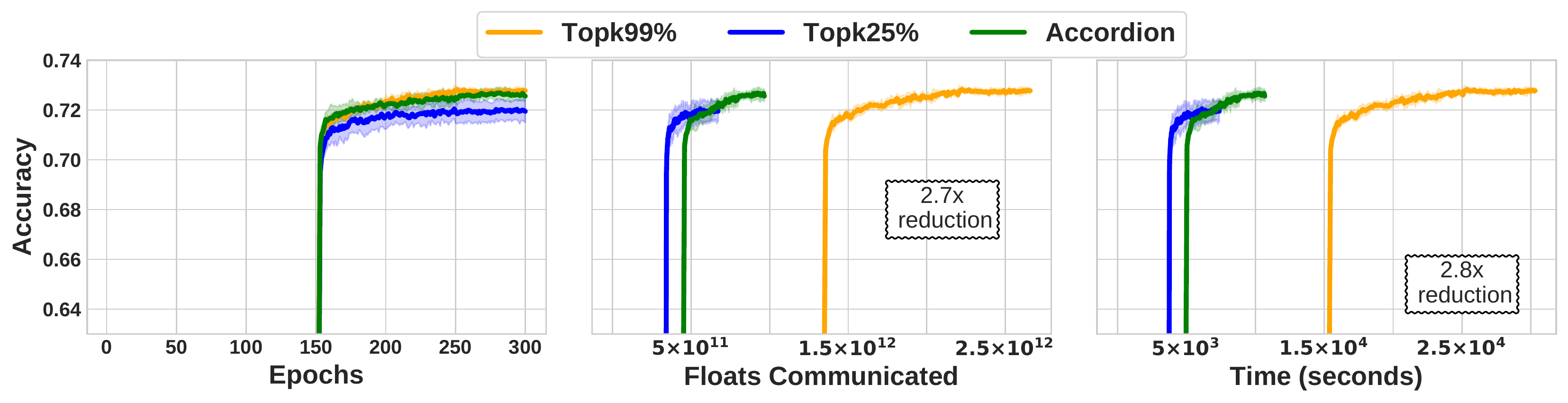}
        \caption{\small{{SeNet trained using \topk\ with \elltopkl{99} and \elltopkh{25} for training }}}
    \end{subfigure}
    \end{center}
\end{figure*}


\FloatBarrier
\section{Detailed Analysis of Batch Size Results}
In \Cref{fig:batchsize_cifar10} and~\ref{fig:batchsize_cifar100} we provide detailed analysis for batch size. We show that, we ran experiments on \cifart\ and \cifarh. For three different CNN's, two recent CNN's with skip connections(ResNet-18, DenseNet) and one CNN without skip connections (Inception V1). Based on the findings of ~\cite{goyal2017accurate, devarakonda2017adabatch} we also modify the learning rate in the same proportion as the change in batch size. 
\begin{figure*}[!t]
\caption{\small{\textbf{\adasp\ on Computer Vision Models trained on \cifart} }}
    \label{fig:batchsize_cifar10}
    \begin{center}
    \begin{subfigure}[b]{\textwidth}
        \includegraphics[width=\textwidth]{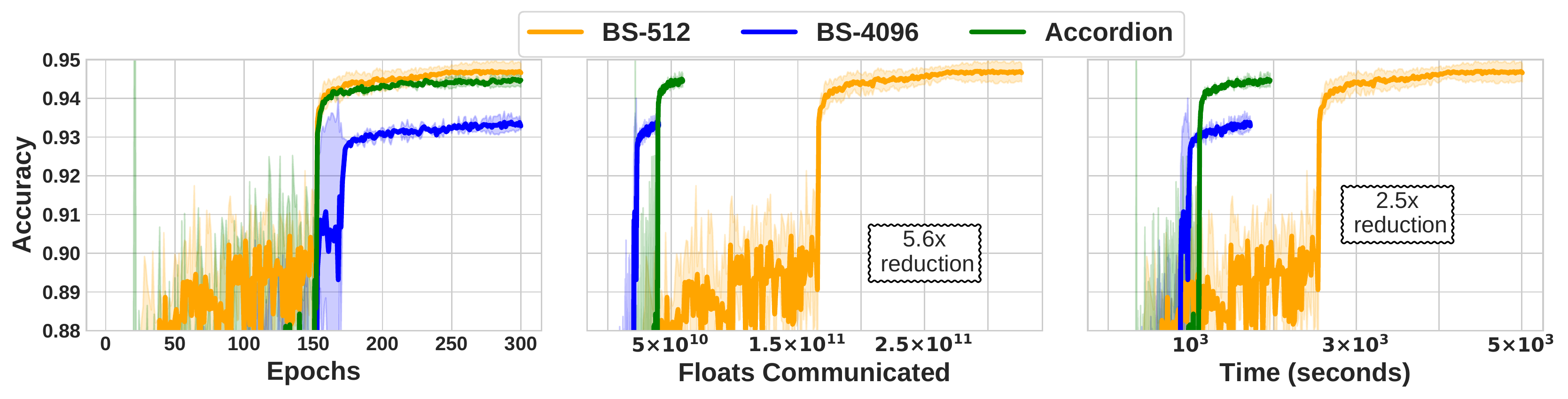}
        \caption{\small{{ResNet-18 trained using Batch Size=512,Batch Size=4096 and \adasp}}}
    \end{subfigure}
     \begin{subfigure}[b]{\textwidth}
        \includegraphics[width=\textwidth]{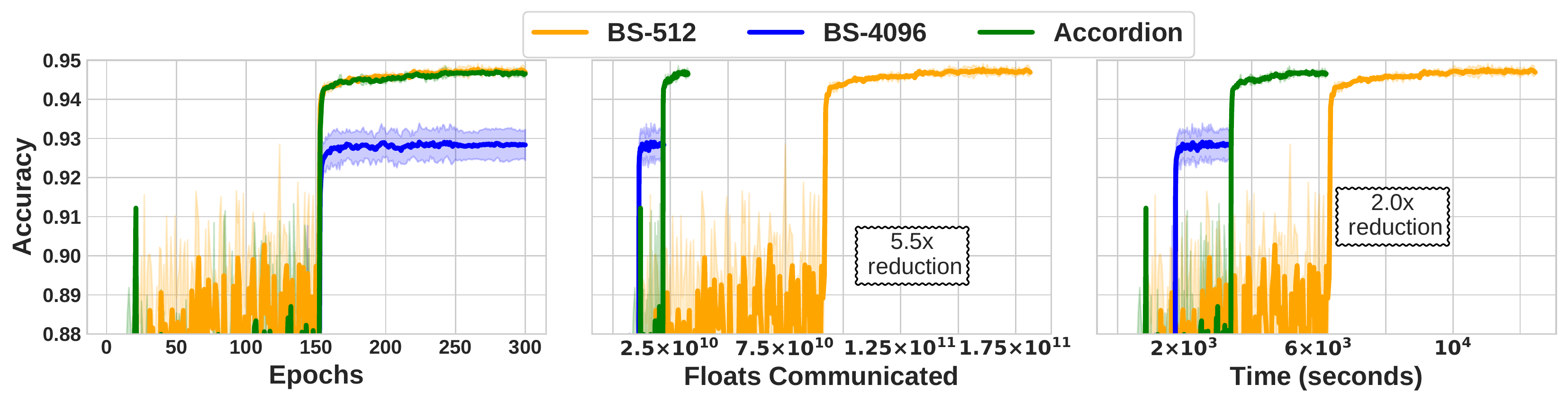}
        \caption{\small{{GoogLeNet trained using Batch Size=512,Batch Size=4096 and \adasp}}}
    \end{subfigure}
    \begin{subfigure}[b]{\textwidth}
        \includegraphics[width=\textwidth]{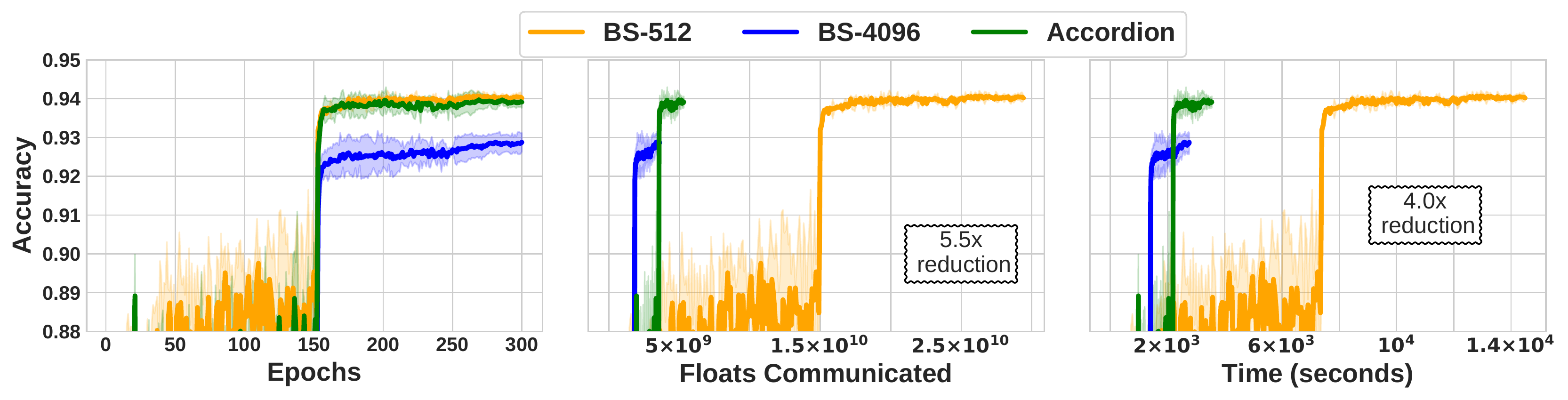}
        \caption{\small{{DenseNet trained using Batch Size=512,Batch Size=4096 and \adasp }}}
    \end{subfigure}
    \end{center}
\end{figure*}

\begin{figure*}[!t]
\caption{\small{\textbf{\adasp\ on Computer Vision Models trained on \cifarh} }}
    \label{fig:batchsize_cifar100}
    \begin{center}
    \begin{subfigure}[b]{\textwidth}
        \includegraphics[width=\textwidth]{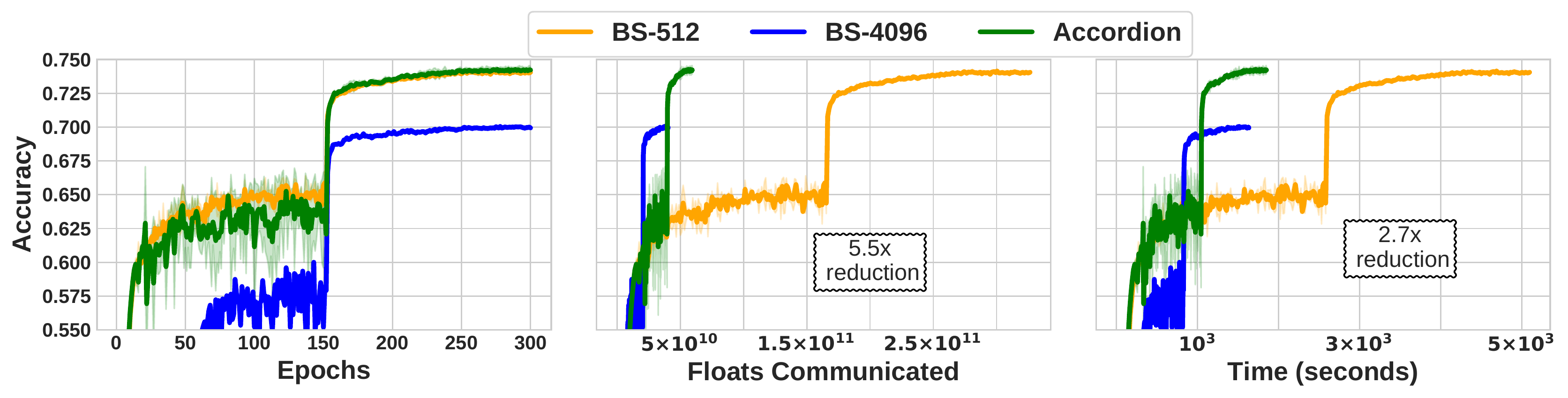}
        \caption{\small{{ResNet-18 trained using Batch Size=512,Batch Size=4096 and \adasp}}}
    \end{subfigure}
     \begin{subfigure}[b]{\textwidth}
        \includegraphics[width=\textwidth]{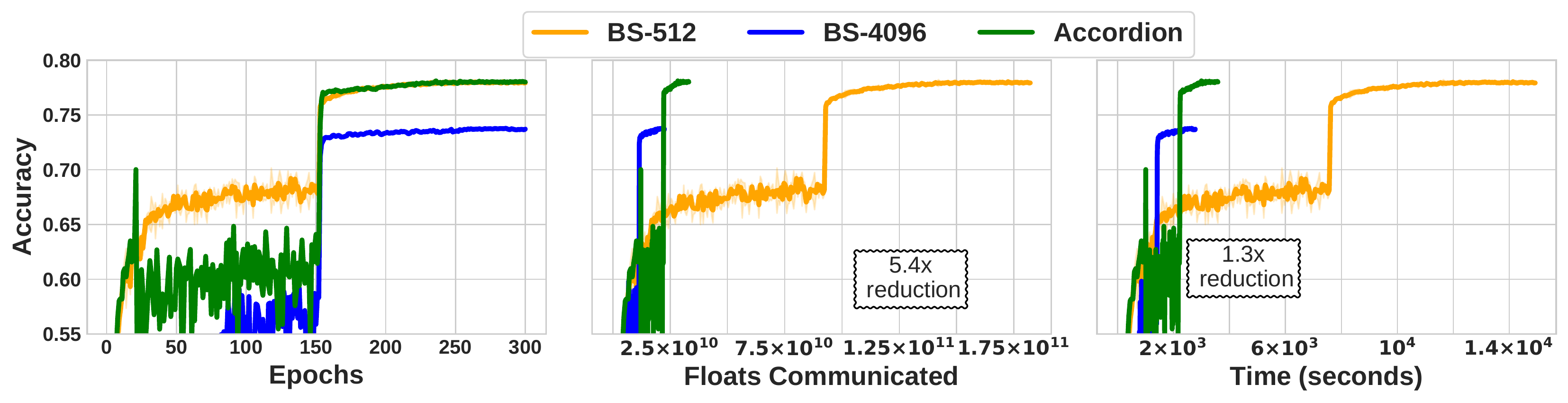}
        \caption{\small{{GoogLeNet trained using Batch Size=512,Batch Size=4096 and \adasp}}}
    \end{subfigure}
    \begin{subfigure}[b]{\textwidth}
        \includegraphics[width=\textwidth]{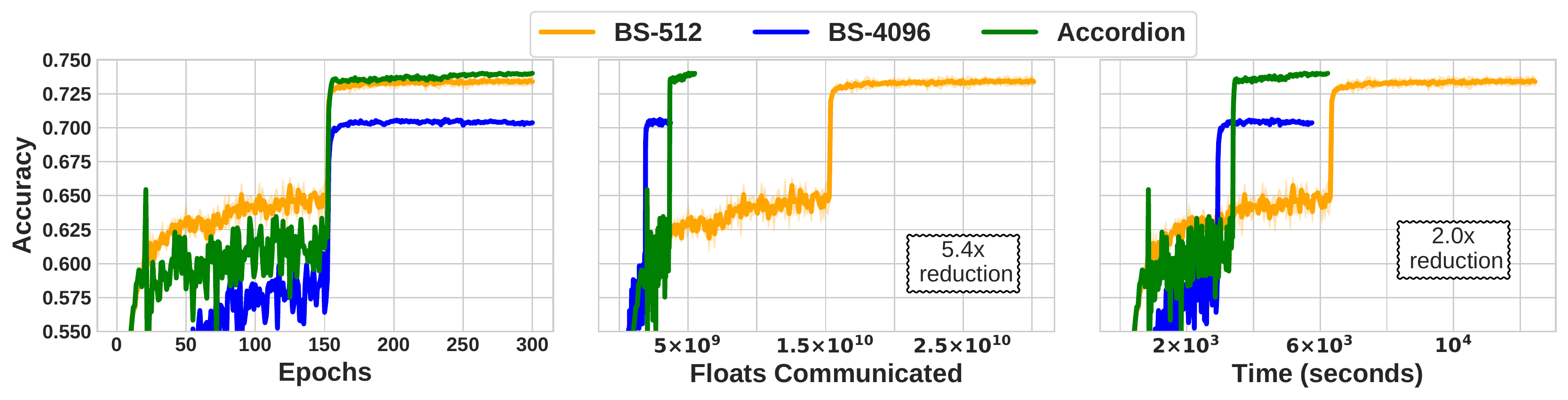}
        \caption{\small{{DenseNet trained using Batch Size=512,Batch Size=4096 and \adasp }}}
    \end{subfigure}
    \end{center}
\end{figure*}

\FloatBarrier
\section{Compression Ratio Selection of Adasparse}
In~\Cref{fig:k_chosen_1,fig:k_chosen_2,fig:k_chosen_3} we show the compression ratio chosen by \adasp\ for different layers when training ResNet-18 on \cifarh\ with \powersgd\ as a gradient compressor. The layer numbers are associated with how \pytorch\ indexes layers in the model. The missing layers numbers are 1 dimensional vectors which can not be compressed by \powersgd. 
\begin{figure}[!htb]
    \includegraphics[width=0.9\linewidth]{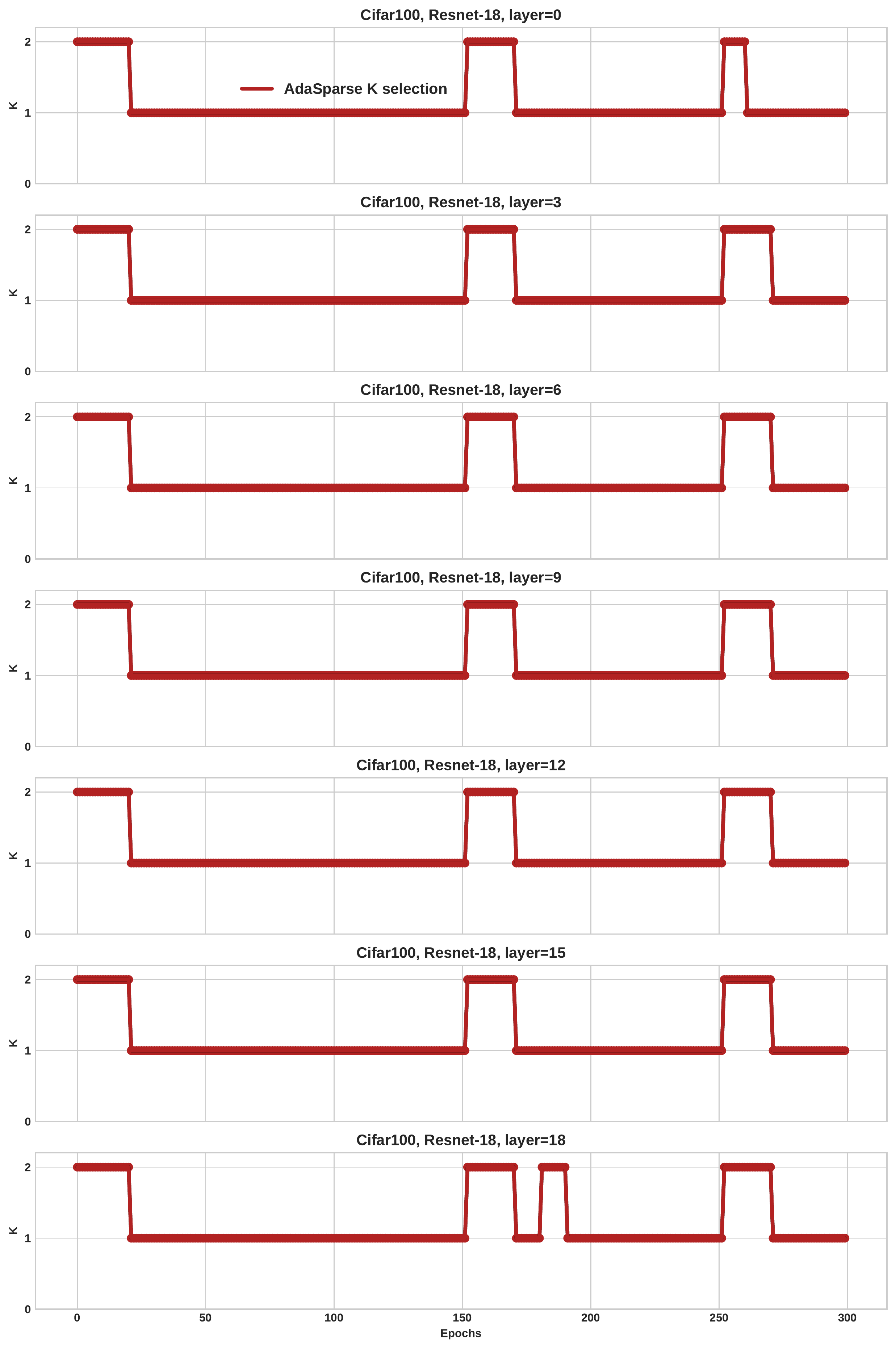}
    \caption{\small{\textbf{Rank selected in Different Regions by \adasp\ when used with \powersgd}}}
    \label{fig:k_chosen_1}
\end{figure}

\begin{figure}[ht]
    \includegraphics[width=0.9\linewidth]{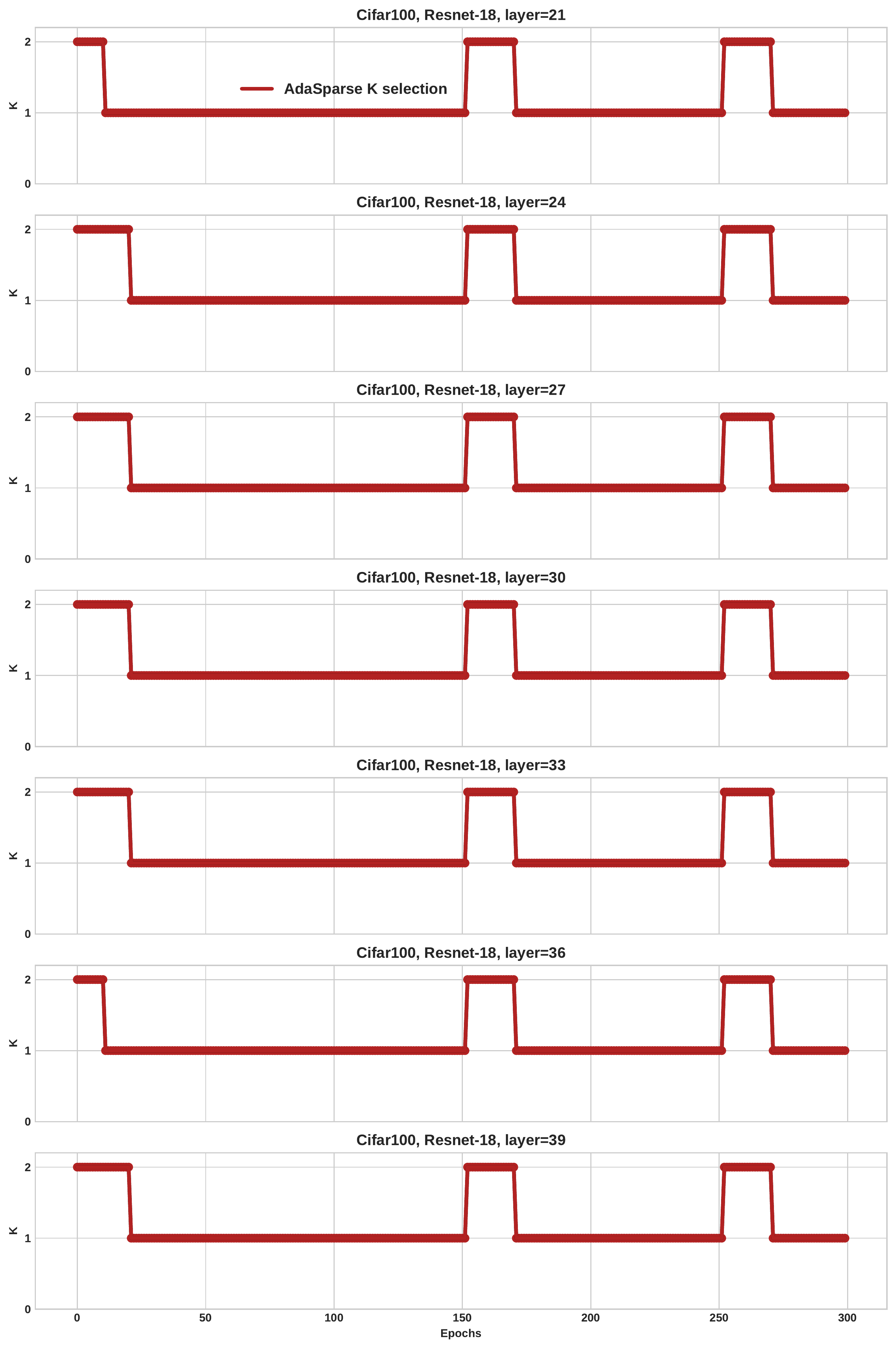}
    \caption{\small{\textbf{Rank selected in Different Regions by \adasp\ when used with \powersgd}}}
    \label{fig:k_chosen_2}
\end{figure}

\begin{figure}[ht]
    \includegraphics[width=0.9\linewidth]{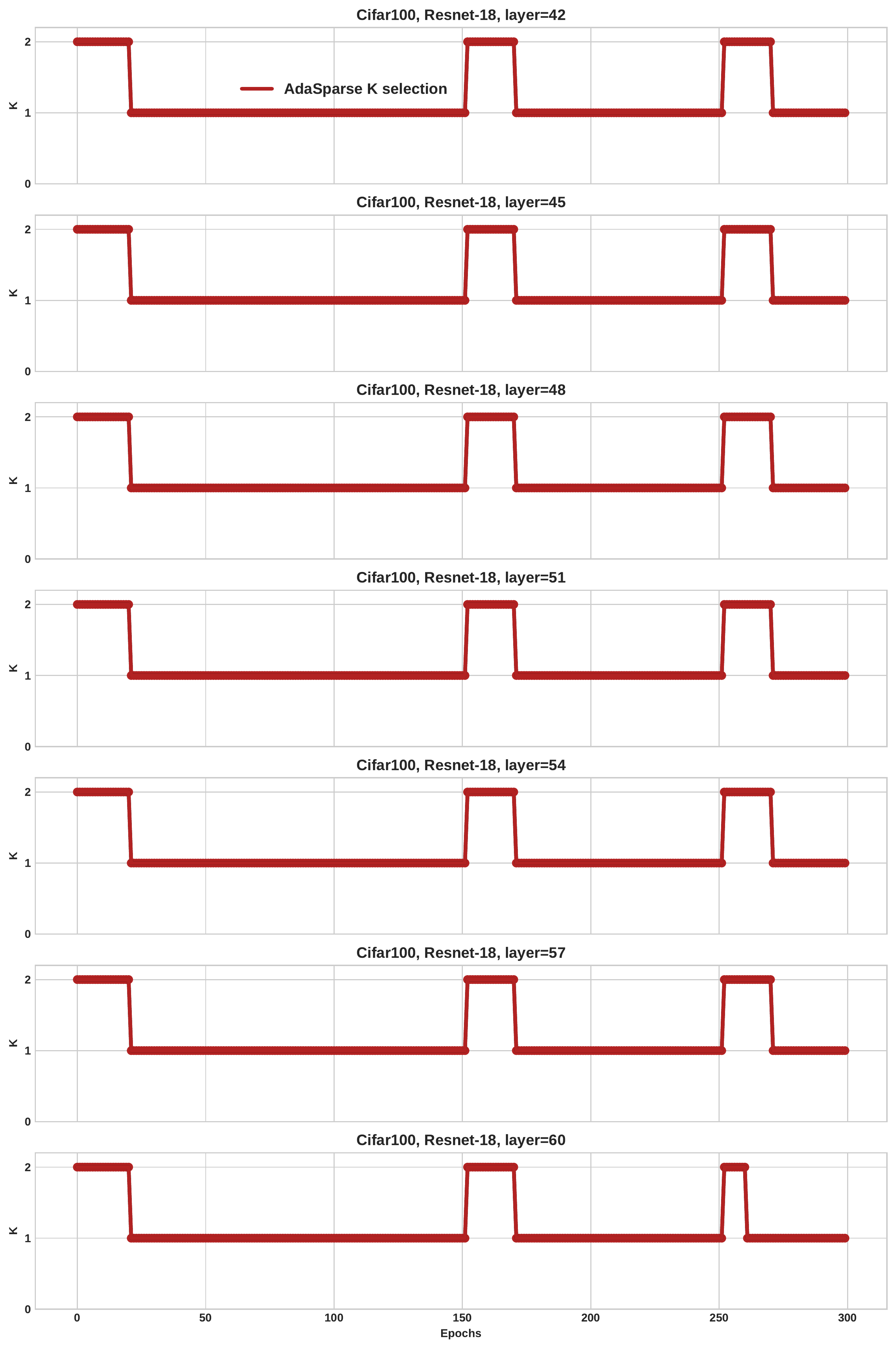}
    \caption{\small{\textbf{Rank selected in Different Regions by \adasp\ when used with \powersgd}}}
    \label{fig:k_chosen_3}
\end{figure}

\FloatBarrier
\section{Model Descriptions}
We use standard implementation for all the models. The model implementations for \cifart\ are borrowed from~\cite{pytorch-cifar10} and for \cifarh\ are borrowed from~\cite{pytorch-cifar100}. Here we present the total number of parameters in each of the model used. 
\begin{table}[!htb]
\begin{minipage}{0.48\textwidth}
\centering
\caption{\small{Total parameters when training \cifart}}
\label{tab:num_param_cifart}
\begin{tabular}{@{}l|l@{}}
\toprule
\textbf{Network} & \textbf{Total Parameters} \\ \midrule
ResNet-18        & 11173962                  \\
VGG-19bn         & 20565834                  \\
SeNet            & 11260354                  \\
WideResNet       & 36489290                  \\
DenseNet         & 1000618                   \\
GoogLeNet        & 6166250                   \\\bottomrule
\end{tabular}
\end{minipage}\quad
\begin{minipage}{0.48\textwidth}
\centering
\caption{\small{Total parameters when training \cifarh}}
\label{tab:num_param_cifarh}
\begin{tabular}{@{}l|l@{}}
\toprule
\textbf{Network} & \textbf{Total Parameters} \\ \midrule
ResNet-18        & 11220132                  \\
VGG-19bn         & 39327652                  \\
SeNet            & 11436256                  \\ 
WideResNet       & 36546980                  \\ 
DenseNet         & 1035268                   \\
GoogLeNet        & 6258500                   \\\bottomrule
\end{tabular}
\end{minipage}
\end{table}

\begin{table}[!htb]
    \centering
    \caption{\small{Total parameters when training \wikitext}}
    \label{tab:num_param_wiki}
    \begin{tabular}{@{}l|l@{}}
    \toprule
    \textbf{Network} & \textbf{Total Parameters} \\ \midrule
    2 Layer LSTM        & 28949319                   \\\bottomrule
\end{tabular}
\end{table}


\end{document}